\documentclass[hidelinks,onefignum,onetabnum]{siamart250211}



\usepackage{amsmath,amssymb,amsfonts}%
\usepackage{graphicx}%
\usepackage{multirow}%
\usepackage{cleveref}
\usepackage{mathrsfs}%
\usepackage[title]{appendix}%
\usepackage{xcolor}%
\usepackage{textcomp}%
\usepackage{manyfoot}%
\usepackage{booktabs}%
\usepackage{algorithm}%
\usepackage{algorithmicx}%
\usepackage{algpseudocode}%
\usepackage{listings}%
\usepackage{setspace}
\usepackage{bm}
\usepackage{dsfont}
\usepackage{shuffle}

\newcommand{\logx}[0]{\ell}
\newcommand{\logy}[0]{\ell'}
\newcommand{\ladj}[2]{L^*_{#1}(#2)}
\newcommand{\ladjmap}[1]{L^*_{#1}}
\newcommand{\radjmap}[1]{R^*_{#1}}
\newcommand{\radj}[2]{R^*_{#1}(#2)}

\newcommand{\srp}[1]{\bm{#1}}
\newcommand{\pab}[2]{#1_{#2}}
\newcommand{\mf}[3]{{\mathbb{#1}}_{#2}^{#3}}

\newcommand{\part}[0]{D}
\newcommand{\parx}[0]{D_x}

\newcommand{\bmx}[0]{W}
\newcommand{\bmy}[0]{V}     
\newcommand{\areax}[0]{A}

\newcommand{\tensora}[0]{\mathbb{A}}
\newcommand{\tensorb}[0]{\mathbb{B}}

\newcommand{\adja}[0]{\varphi}
\newcommand{\adjb}[0]{\psi}
\newcommand{\identity}[0]{\mathds{1}}
\newcommand{\tensor}[1]{\mathbb{#1}}
\newcommand{\tensorprod}[0]{\cdot}
\newcommand{\logsig}[2]{\ell^{#1}_{#2}}

\Crefformat{section}{#2Sec.~#1#3}
\crefname{subsection}{Sec.}{Secs.}
\Crefname{subsection}{Sec.}{Secs.}

\usepackage[colorinlistoftodos]{todonotes}

\newcommand{\statespace}[0]{V}
\newcommand{\secondstatespace}[0]{U}
\newcommand{\Banach}[0]{E}
\usepackage{enumitem} 
\usepackage{amssymb,amsmath}
\usepackage{graphicx}

\newcommand{\hilbert}[1]{H} 
\newcommand{\mixed}[1]{\frac{\partial^2 #1}{\partial s\partial t}}
\newcommand{\single}[2]{\frac{\partial #2}{\partial #1}}
\newcommand{\pathx}[0]{x}

\usepackage[acronym]{glossaries}
\glsdisablehyper
\newacronym{PABs}{}{{piecewise log-linear paths}}
\newacronym{PAB}{}{{piecewise log-linear path}}
\newacronym{PA}{}{{piecewise log-linear}}

\usepackage{lipsum}
\usepackage{amsfonts}
\usepackage{graphicx}
\usepackage{epstopdf}
\ifpdf
  \DeclareGraphicsExtensions{.eps,.pdf,.png,.jpg}
\else
  \DeclareGraphicsExtensions{.eps}
\fi


\newsiamremark{remark}{Remark}
\newsiamremark{hypothesis}{Hypothesis}
\crefname{hypothesis}{Hypothesis}{Hypotheses}
\newsiamthm{claim}{Claim}
\newsiamremark{fact}{Fact}
\crefname{fact}{Fact}{Facts}

\headers{Log-PDE Methods for Rough Signature Kernels}{M. Lemercier, T. Lyons, and C. Salvi}

\title{Log-PDE Methods for Rough Signature Kernels 
\thanks{Preliminary work
\funding{This work was supported in part by EPSRC (NSFC) under Grant EP/S026347/1, in part by The Alan Turing Institute under the EPSRC869 grant EP/N510129/1, the Data Centric Engineering Programme (under the Lloyd’s
Register Foundation grant G0095), the Defence and Security Programme (funded by
the UK Government) and the Office for National Statistics and The Alan Turing Institute (strategic partnership) and in part by the Hong Kong Innovation and Technology Commission (InnoHK Project CIMDA).}}}

\author{Maud Lemercier\thanks{Mathematical Institute, University of Oxford.
  (\email{maud.lemercier@maths.ox.ac.uk}, \email{terry.lyons@maths.ox.ac.uk}).}
\and Terry Lyons\footnotemark[2]
\and Cristopher Salvi \thanks{Department of Mathematics, Imperial College London.
  (\email{c.salvi@ic.ac.uk})}} 

\usepackage{amsopn}


\ifpdf
\hypersetup{
  pdftitle={Log-PDE Methods for Rough Signature Kernels},
  pdfauthor={M. Lemercier, T. Lyons, and C. Salvi}
}
\fi


\externaldocument[][nocite]{ex_supplement}


\begin{document}

\maketitle

\begin{abstract}
Signature kernels, inner products of path signatures, underpin several machine learning algorithms for multivariate time series analysis. For bounded variation paths, signature kernels were recently shown to solve a Goursat PDE. However, existing PDE solvers only use increments as input data, leading to first-order approximation errors. These approaches become computationally intractable for highly oscillatory input paths, as they have to be resolved at a fine enough scale to accurately recover their signature kernel, resulting in significant time and memory complexities. In this paper, we extend the analysis to rough paths, and show, leveraging the framework of smooth rough paths, that the resulting rough signature kernels can be approximated by a novel system of PDEs whose coefficients involve higher order iterated integrals of the input rough paths. We show that this system of PDEs admits a unique solution and establish quantitative error bounds yielding a higher order approximation to rough signature kernels.

\end{abstract}

\begin{keywords}
Rough paths, path signatures, kernel methods, hyperbolic PDEs, sequential data.
\end{keywords}

\begin{MSCcodes}
60L10, 60L20
\end{MSCcodes}

\section{Introduction}\label{sec1}

Kernels are at the core of several well-established methods for classification \cite{steinwart2008support}, regression \cite{drucker1996support,brouard2016input}, novelty detection \cite{scholkopf2001estimating} and statistical hypothesis testing \cite{gretton2012kernel,wynne2022kernel}. Real-valued kernels, defined as symmetric positive definite functions $\kappa:\mathcal{X}\times\mathcal{X}\to \mathbb R$, arise in Bayesian statistics as covariance functions for Gaussian process priors \cite{williams2006gaussian} over real-valued functions $f:\mathcal{X}\to\mathbb R$. In deep learning, they have played a pivotal role in understanding the large-scale limits of neural networks \cite{lee2017deep, jacot2018neural}. They also underpin the construction of statistical scoring rules and discrepancies for fitting deep generative models \cite{li2017mmd, lorenzothesis, matsubara2022robust}. They are useful in mesh-free methods for solving partial differential equations and inverse problems \cite{chen2021solving,batlle2024kernel}.
All these techniques are transferable to different input spaces in the sense that they can be tailored to different data types by choosing a suitable kernel. A real-valued kernel can always be represented as an inner product $\kappa(x,y)=\langle  \varphi(x), \varphi(y) \rangle$ between some representations of the inputs $x$ and $y$ in a Hilbert space $H$ with inner product $\langle\cdot,\cdot \rangle$ via a feature map $\varphi:\mathcal{X}\to H$. Depending on the problem at hand, better results might be achieved by mapping the inputs to a high-dimensional or even infinite-dimensional feature space. Numerically, this is tractable if the inner product can be obtained without computing every individual coordinate of $\varphi(x)$ and $\varphi(y)$, a strategy which is commonly referred to as a \emph{kernel trick}. 

In this article, we consider so-called \emph{signature kernels}  \cite{kiraly2019kernels, salvi2021rough, salvi2021signature}, a class of symmetric positive definite functions defined on some spaces of paths, known for their effectiveness in time series data analysis. These kernel functions have an explicit representation in terms of the signature, a central map in stochastic analysis \cite{lyons1998differential}. The latter maps any continuous bounded variation path $\pathx:[0,T]\to \statespace$ with values in a vector space $\statespace$ to the solution of a linear controlled differential equation (CDE) 
\begin{align}\label{eq:sig_cde}
dS(x)_t=S(x)_t\tensorprod d\pathx_t, \text{ on }[0,T]
\end{align}
with values in $T((V))  = \prod_{k=0}^\infty V^{\otimes k}$, driven by $x$ and started at $S(x)_0=(1,0,0,\ldots)$. 
\noindent 
A well-known result states that linear functionals on the tensor algebra \( T((V)) \), when restricted to the range of the signature, form a unital algebra that separates points. As a consequence, the classical Stone--Weierstrass theorem implies that such functionals are dense in the space of continuous real-valued functions on compact sets of unparameterized paths~\cite{cass2024topologies}. This endows the signature with strong representational power, allowing for the approximation of general path-dependent functionals through linear models~\cite{salvi2023structure}. In recent years, this theoretical foundation has fueled a surge of interest in signature methods, which have been successfully deployed across a wide spectrum of data science applications. These include deep learning approaches to sequential data~\cite{kidger2019deep, morrill2021neural, salvi2022neural, hoglund2023neural, cirone2024theoretical, issa2024non, barancikova2024sigdiffusions, cirone2025parallelflow}, financial modelling~\cite{arribas2020sigsdes, salvi2021higher, horvath2023optimal, pannier2024path, cirone2025rough}, cybersecurity~\cite{cochrane2021sk}, and computational neuroscience~\cite{holberg2024exact}, among other domains. For a comprehensive and pedagogical introduction, the reader is referred to~\cite{cass_salvi_notes}, and for a concise summary of recent applications, see~\cite{fermanian2023new}. Signature kernels are then defined as \(\kappa(x,y):=\langle S(x), S(y) \rangle\), for various inner products \(\langle \cdot, \cdot \rangle\) defined on \(T((V))\). These have found several applications in Bayesian modelling \cite{toth2020bayesian,lemercier2021siggpde}, regression problems \cite{lemercier2021distribution,salvi2021higher,cochrane2021sk, manten2024signature}, generative modelling \cite{dyer2022approximate,issa2024non}, theoretical deep learning \cite{fermanian2021framing,muca2023neural} and numerical analysis \cite{pannier2024path, shmelev2024sparse}. 


As for many other kernels, signature kernels can be estimated numerically via ``kernel tricks" without explicitly computing the underlying signature feature maps $S(x), S(y)$. Various such signature kernel tricks have been introduced in the literature, including a method based on efficient polynomial evaluation schemes \cite{kiraly2019kernels} as well as a PDE-based approach \cite{salvi2021signature} and variants \cite{cass2024weighted, cass2025numerical}. 

An easy way to obtain a PDE approximation to the signature kernel is to approximate the solution of the CDE \cref{eq:sig_cde} on $[s,t]$ with the solution of the ODE on $T((V))$
\begin{align}\label{eq:local_sig_ode}
    d\tensor{Z}^x_u = \tensor{Z}^x_u \tensorprod x_{s,t}du, \quad u \in [0,1]. 
\end{align}
where $x_{s,t}:=x_t-x_s$ is the increment of $x$. Then, the inner product $f_{u,v}=\langle \tensor{Z}^x_u, \tensor{Z}^y_v\rangle$ where $f : [0,1]^2\to \mathbb R$ satisfies the PDE
\begin{equation}\label{eq:pde_log1}
    \frac{\partial^2 f}{\partial u \partial v}=\langle x_{s,s'}, y_{t,t'}\rangle f, \quad \text{for all }(u,v)\in[0,1]^2.
\end{equation}
All these existing approaches only use increments of the paths $x$ and $y$ as input data and therefore produce error estimates of order $\mathcal{O}(\max\{s'-s, t'-t\})$. 

However, highly oscillatory input paths force the use of small step sizes and the memory and time complexities of PDE solvers for \cref{eq:pde_log1} become prohibitive. Our solution consists of describing highly oscillatory paths as \emph{rough paths} \cite{lyons1998differential}. These map a subinterval of $[0,T]$ to a signature truncated at level $n$, where $n$ is determined by the roughness of the path, and naturally generalise smooth paths which output increments, i.e. signatures truncated at level $n=1$. Taking the first non-trivial case $n=2$ as a running example to ease notation, a geometric $p$-rough path $\mathbb X$, with $p \in [2, 3)$, is then defined on $[s,t]\subset[0,T]$ as $\mathbb{X}_{s,t}=\exp_2\left(\left(0,(x_{s,t}^{(i)})_{i=1}^{d},(x_{s,t}^{(i,j)})_{i,j=1}^{d}\right)\right)$ where $\exp_2$ is the tensor exponential truncated at level $n=2$. In this paper, we observe that \cref{eq:local_sig_ode} is the order $n=1$ case of a more general numerical scheme for CDEs, known as the \emph{log-ODE method}. Applying such method to the signature CDE \cref{eq:sig_cde} yields the order-$n$ ODE
\begin{equation}\label{sig_ode_2}
    d\tensor{Z}^x_u = \tensor{Z}^x_u \tensorprod\log_{n} S(\tensor{X})_{s,t}du, \quad u\in [0,1]
\end{equation}
where $\log_{n}$ is the tensor logarithm truncated at level $n$. Taking inner products of solutions of log-ODEs, we introduce a class of PDE schemes dubbed \emph{log-PDE methods} approximating \emph{rough signature kernels} of the form $\langle S(\mathbb X), S(\mathbb Y)\rangle$. Notably, we prove in \Cref{thm:pde} for $n=2$ that the variables
\begin{align}
    f_{u,v}:= \left\langle \tensor{Z}_u^x, \tensor{Z}_v^y \right\rangle, \quad \varphi^{(i)}_{u,v}:= \left\langle \tensor{Z}_u^x \tensorprod e_i, \tensor{Z}_v^y \right\rangle, \quad \psi^{(i)}_{u,v}:= \left\langle \tensor{Z}_u^x, \tensor{Z}_v^y \tensorprod e_i \right\rangle
\end{align}
satisfy the following system of PDEs
    \begin{align}\label{eq:pde_log2}
    \frac{\partial^2f}{\partial u \partial v}  &= \gamma f + \sum_{j=1}^d \alpha_j \varphi^{(j)} +  \sum_{j=1}^d \beta_j \psi^{(j)} \\
    \frac{\partial \varphi^{(j)}}{\partial v} &= y^{(j)}_{t,t'} f + \sum_{i=1}^d y^{(i,j)}_{t,t'}\psi^{(i)}\\
    \frac{\partial \psi^{(j)}}{\partial u} &= x^{(j)}_{s,s'} f + \sum_{i=1}^d x^{(i,j)}_{s,s'} \varphi^{(i)}, 
\end{align}
for all $(u,v)\in [0,1]\times[0,1]$  with appropriate boundary conditions and where
\begin{equation}
    \alpha_j := \sum_{i=1}^d x^{(i,j)}_{s,s'}y^{(i)}_{t,t'}, \quad \beta_j := \sum_{i=1}^d x^{(i)}_{s,s'}y^{(i,j)}_{t,t'}, \quad 
    \gamma = \sum_{i=1}^{d}x^{(i)}_{s,s'}y^{(i)}_{t,t'}+\sum_{i,j=1}^{d}x^{(i,j)}_{s,s'}y^{(i,j)}_{t,t'}.
\end{equation}
In \Cref{thm:uniqueness} we prove that this system has a unique solution and  in \Cref{thm:pde_genericn} we derive a similar system of PDEs for a generic truncation $n$. Finally, in \Cref{thm:global_error_second_order_pde} we prove that the solution of this system of PDEs provides an approximation to the rough signature kernel  $\langle S(\mathbb X), S(\mathbb Y)\rangle$ of order
\begin{equation}
    \mathcal{O}\left(\max\bigg\{\left(\frac{e\omega_x(0,T)}{N}\right)^{\frac{n+1}{p}-1}\!, \left(\frac{e\omega_y(0,T)}{N'}\right)^{\frac{n+1}{p}-1}\bigg\}\right)
\end{equation}
where $\omega_x$ and $\omega_y$ are controls of $\mathbb X$ and $\mathbb Y$ respectively. Additionally, we show that the signature kernel of smooth rough paths \cite{bellingeri2022smooth} also solve a PDE.

\subsection{Outline}
We begin in \Cref{sec:prelim} by recalling concepts from rough path theory. In \Cref{sec:log_ode_signature} we apply the step-$n$ log-ODE method to the signature CDE. In \Cref{sec:case_two} we present the main results of this paper focusing first on the case $n=2$ for $p$-rough paths with $p \in [1, 3)$. We then extend these results to arbitrary $n\geq \lfloor p \rfloor$ with $p\geq 1$, and to the class of smooth rough paths in \Cref{sec:general}. In \Cref{sec:experiments}, these findings are illustrated on simulated data. Finally, in \Cref{sec:conclusion}, we conclude and outline potential future work directions.


\section{Main definitions}\label{sec:prelim}
In this section, we recall some notions from rough path theory necessary for the rest of the paper. 
\begin{definition}[p-variation]\label{def:pvar} Let $\pathx:[0,T]\to \statespace$ be a continuous path valued in a normed vector space $(\statespace,\|\cdot\|)$. The $p$-variation of $\pathx$ on any interval $[s,t]\subseteq[0,T]$ is defined by
\begin{align*}
\|\pathx\|_{p\mbox{-}\mathrm{var},[s,t]}=\left(\sup_{\part \subset [s,t]}\sum_{t_i\in \part}\|\pathx_{t_{i+1}}-\pathx_{t_{i}}\|^p\right)^{1/p}
\end{align*}
where the supremum is taken over all partitions $\part$ of the interval $[s,t]$. We denote the space of continuous paths of bounded $p$-variation by $C^{p\mbox{-}\mathrm{var}}([0,T],V)$.
\end{definition}

Any continuous path $\pathx:[0,T]\to \statespace$ of finite $1$-variation can be canonically lifted to a path $\tensor{Z}:t\mapsto S(\pathx)_{0,t}$ with values in $T((\statespace))$, the space of tensor series over $\statespace$, simply by considering all its iterated (Riemann-Stieltjes) integrals. This space is endowed with two internal operations: an addition and a product. For any two elements $\tensor{A} = (a^0, a^1,a^2, \ldots)$ and $\tensor{B} = (b^0, b^1, b^2, \ldots)$ in $T((\statespace))$ and any scalar $\lambda\in\mathbb{R}$, 
\begin{gather*}
    \lambda \tensor{A}+\tensor{B}=(\lambda a^0+b^0,\!\;\lambda a^1+b^1,\!\;\ldots) \\ 
    \tensor{A}\tensorprod \tensor{B}= (c^0,\!\;c^1,\!\;\ldots) \quad\text{ with } c^n = \sum_{k=0}^{n}a^k\otimes b^{n-k}.
\end{gather*}
The space $T((\statespace))$ endowed with these operations is a (non-commutative) algebra with unitary element $\identity = (1, 0, 0,\ldots)$. It is often important to look only at finitely many terms of an element of $T((\statespace))$. To this aim, we define $T^n(\statespace)$ the truncated tensor algebra over $\statespace$ of order $n\in\mathbb{N}$. 
More precisely,  $T^n(\statespace)$ is the quotient of $T((\statespace))$ by the ideal $T^{>n}(\statespace)$ defined by
\begin{align*}
T^{>n}(\statespace)=\left\{\tensora=(0,0,\ldots, a^{n+1}, \ldots)~|~ \tensora\in T((\statespace))\right\}.
\end{align*}
We denote by $\pi_{\leq n}$ the quotient map from $T((\statespace))$ to $T^n(\statespace)$, and we identify $T^n(\statespace)$ with $\bigoplus^n_{k=0}\statespace^{\otimes k}$ equipped with the product $$(a^0,a^1,\ldots,a^n)\otimes_n(b^0,b^1,\ldots,b^n)=(c^0,c^1,\ldots,c^n)$$ with $c^i=\sum_{k=0}^{i}a^k\otimes b^{i-k}$. With this identification $\pi_{\leq n}$ becomes a projection. We denote by $\pi_n$ the projection on $\statespace^{\otimes n}$.

\medskip 

The unital associative algebra $(T((\statespace)),+,\tensorprod)$ carries a Lie bracket $[\cdot,\cdot]$ defined by $[\tensora, \tensorb] = \tensora\tensorprod \tensorb - \tensorb\tensorprod \tensora$, and there are several canonical Lie algebras associated to $T((\statespace))$. 

\medskip
\begin{definition}[Lie series]
The space of Lie formal series over $\statespace$, denoted as $\mathcal{L}((\statespace))$ is defined as the following subspace of $T((\statespace))$
\begin{align*}
  \mathcal{L}((\statespace))=\left\{\tensor{L}=(l^0,l^1,\ldots)~|~\forall k\geq 0,~l^k\in L_k\right\}  
\end{align*}
where $L_0 = 0$, $L_1 = \statespace$, and $L_{k+1} = [\statespace, L_k]$, with $[\statespace,\secondstatespace]$ denoting the linear span of all elements of the form $[e,f]$ where $(e,f)\in \statespace\times \secondstatespace$ for any two linear subspaces $\statespace,\secondstatespace$  of $T((\statespace))$.
\end{definition}

\medskip
\noindent If $\pathx$ is a path segment, then $\log S (\pathx) \in \mathcal{L}((\statespace))$ \cite{chen1957integration} where the
logarithm map is defined for any $\tensora\in T^{>0}(\statespace)$ by 
 \begin{align}\label{eq:log}
\log(\identity+\tensora) = \sum_{k\geq 1}\frac{(-1)^{k-1}}{k}\tensora^{\otimes k}.     
\end{align}

\noindent We denote the space of Lie polynomials by $\mathcal{L}(\statespace)$. For any $n \geq 1$, the \emph{step-$n$ free Lie algebra} is defined by $\mathcal{L}^n(\statespace) := \pi_{\leq n}\left(\mathcal{L}((\statespace))\right)$ with elements called Lie polynomials of degree $n$. The map $\log_n$ associates to each $\tensora\in \pi_{\leq n}(T^{>0}(\statespace))$ the element of $T^n(\statespace)$ defined as 
\begin{align*}
\log_n(\identity+\tensora) = \sum_{k=1}^{n}\frac{(-1)^{k-1}}{k}\tensora^{\otimes k}.
\end{align*}
Note that it satisfies $\pi_{\leq n}(\log(\identity+\tensora))=\log_n(\pi_{\leq n}(\identity+\tensora))$. Finally, we note that the path signature $\tensor{Z}$ actually takes its values in a curved subspace $G(\statespace)\subset T((\statespace))$ with a group structure. It is given by $G(\statespace)=\exp \{\mathcal{L}((\statespace))\}$
where 
\begin{align*} \exp(\tensora)=\sum_{k=0}^{\infty}\frac{\tensora^{\otimes k}}{k!}.
\end{align*}
For each $n \geq 1$, we denote $G^n(\statespace)=\pi_{\leq n}(G(\statespace))$ and $\exp_n(\tensora)=\sum_{k=0}^{n}\frac{\tensora^{\otimes k}}{k!}$.

\medskip
\subsection{Rough paths}
More generally, a path of bounded $p$-variation with values in $G^{\lfloor p \rfloor}(\statespace)$ has a canonical lift to a path with values in $G(\statespace)$. In the sequel, denote by $\Delta_T$ the simplex $\Delta_T:=\{(s, t) \in
[0, T]^2~|~0 \leq s \leq t \leq T \}$ and we say that a continuous map $\omega:\Delta_T\to [0,+\infty)$ is a \emph{control function} if it is super-additive, that is, $\omega(s,t)+\omega(t,u)\leq \omega(s,u)$ for all $0\leq s\leq t\leq u \leq T$.

\bigskip 
\begin{definition}[Multiplicative functional] Let $n \geq 1$ be an integer and
let $\mf{X}{}{}:\Delta_T \to T^{n}(\statespace)$ be a continuous map. For each $(s, t) \in \Delta_T$, denote
by $\mf{X}{s,t}{}$ the image by $\mf{X}{}{}$ of $(s, t)$ and write
$$\mf{X}{s,t}{} = (x_{s,t}^{0}, x_{s,t}^{1}, \ldots, x_{s,t}^{n})\in T^{n}(\statespace).$$
The function $\mf{X}{}{}$ is called a multiplicative functional of degree $n$ in $\statespace$ if 

\begin{enumerate}[label=(\roman*)]
\item  $x_{s,t}^0=1$ for all $(s, t) \in \Delta_T$ and 
\item Chen's identity holds, that is, 
$$\mf{X}{s,u}{}\tensorprod\mf{X}{u,t}{} = \mf{X}{s,t}{}, \quad \forall s,t,u\in [0,T], \quad s\leq u\leq t.$$
\end{enumerate}
\end{definition}

\noindent Given a path $\tensor{X}_t=(1,x^1_t, \ldots, x^n_t)$ in $T^n(\statespace)$ we say that $\mf{X}{s,t}{}=\tensor{X}_s^{-1}\tensorprod \tensor{X}_t$ is the multiplicative functional determined by $\tensor{X}$. Conversely, given a multiplicative functional $\mf{X}{s,t}{}$ and a point $\tensor{X}_0\in
T^{n}(\statespace)$, we say that $\tensor{X}_t = \tensor{X}_0\tensorprod \mf{X}{0,t}{}$
is the path starting at $\tensor{X}_0$ determined
by $\mf{X}{s,t}{}$. The $p$-variation over $[s,t]$ of a multiplicative functional $\mf{X}{}{}$ of degree $\lfloor p\rfloor $ is defined by 
\begin{align}
\|\mf{X}{}{}\|_{p\mbox{-}\text{var},[s,t]}:=\sup_{\part\subset [s,t]}\max_{1\leq k\leq \lfloor p\rfloor}\left(\sum_{t_i\in \parx}\|x^k_{s,t}\|^{p/k}\right)^{1/p}   
\end{align}
where the supremum is taken over all finite partitions $\part$ of $[s,t]$.
\bigskip 

\begin{definition}[$p$-rough path] Let $p \geq 1$ be a real number. A $p$-rough path $\mf{X}{}{}$ in $\statespace$ is a multiplicative functional of degree $\lfloor p\rfloor$ in $\statespace$ with finite $p$-variation.
\end{definition}

\bigskip 

\begin{theorem}[Thm. 2.2.1 in \cite{lyons1998differential}]\label{thm:extension}
Let $p\geq 1$ be a real number and $n \geq 1$
an integer. Let $\mf{X}{}{}:\Delta_T\to T^{n}(\statespace)$ be a multiplicative functional with finite
$p$-variation controlled by a control $w$ and assume that $n \geq \lfloor p\rfloor $. Then there exists
a unique extension of $\mf{X}{}{}$ to a multiplicative functional 
$\Delta_T \to T((\statespace))$ which
possesses finite $p$-variation.
More precisely, for every $m\geq p + 1$, there exists a unique continuous
function $x^m:\Delta_T\to \statespace^{\otimes m}$ such that the map
$$(s,t) \mapsto (1,x^1_{s,t}, \ldots,x^{\lfloor p\rfloor }_{s,t}, \ldots, x^m_{s,t},\ldots)\in T((\statespace))$$
is a multiplicative functional with finite $p$-variation controlled by $\omega$. By this
we mean that $\|x^i_{s,t}\|_{\statespace^{\otimes k}}<\frac{\omega(s,t)^{i/p}}{\beta_p(i/p)!}$ $\forall i\geq 1$ $\forall(s,t)\in\Delta_T$
where $\beta_p=p^2\Big(1+\sum_{r=3}^{\infty}\frac{2}{r-2}^{(\lfloor p\rfloor+1)/p}\Big)$.
\end{theorem}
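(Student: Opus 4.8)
The plan is to argue by induction on the tensor level $m$, reconstructing the component $x^m$ from the lower‑level components $x^1,\dots,x^{m-1}$ that have already been extended. Since $\mathbf{X}$ is multiplicative of degree $n\geq\lfloor p\rfloor$ with finite $p$‑variation and only finitely many levels are involved up to level $n$, after possibly rescaling the control $\omega$ we may assume the target estimate $\|x^i_{s,t}\|\leq \omega(s,t)^{i/p}/(\beta_p\,(i/p)!)$ already holds for $1\leq i\leq n$; this is the base case. Fix $m\geq n+1$; then $m>p$, so $m/p>1$, the inequality that drives the whole argument, and suppose $x^1,\dots,x^{m-1}$ have been built with these bounds and satisfying Chen's identity.

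For the construction of $x^m$, given a dissection $\parx=(s=u_0<\dots<u_N=t)$ of $[s,t]$ I would set $x^{m,\parx}_{s,t}$ to be the degree‑$m$ coordinate of the product, taken in $T^m(\statespace)$, of the factors $(1,x^1_{u_{i-1},u_i},\dots,x^{m-1}_{u_{i-1},u_i},0)$ for $i=1,\dots,N$. Because $\mathbf{X}$ is multiplicative, the coordinates of this product in degrees $1,\dots,m-1$ coincide with $x^1_{s,t},\dots,x^{m-1}_{s,t}$ regardless of $\parx$; only the degree‑$m$ coordinate depends on the dissection. The crucial locality property is that deleting one interior point $u_j$ changes $x^{m,\parx}_{s,t}$ by precisely $\sum_{k=1}^{m-1}x^k_{u_{j-1},u_j}\otimes x^{m-k}_{u_j,u_{j+1}}$ --- the lower levels are untouched and the zero‑padding at degrees $0$ and $m$ suppresses the boundary terms --- and, invoking the induction bounds together with super‑additivity of $\omega$ and the neoclassical inequality $\sum_{k=0}^{m}\frac{a^{k/p}b^{(m-k)/p}}{(k/p)!\,((m-k)/p)!}\leq p^2\,\frac{(a+b)^{m/p}}{(m/p)!}$, the norm of this increment is at most $\frac{p^2}{\beta_p^2}\cdot\frac{\omega(u_{j-1},u_{j+1})^{m/p}}{(m/p)!}$.

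Next I would pass to the limit. In any dissection with $r$ interior points there is a point $u_j$ with $\omega(u_{j-1},u_{j+1})\leq \tfrac{2}{r}\,\omega(s,t)$, because the intervals $[u_{j-1},u_{j+1}]$ split into two families with pairwise disjoint interiors, so their $\omega$‑values sum to at most $2\,\omega(s,t)$. Deleting such points one at a time down to a single interior point and summing the costs, the series $\sum_r (2/r)^{m/p}$ converges since $m/p>1$; a careful accounting (using $m/p\geq(\lfloor p\rfloor+1)/p$ to dominate $(2/r)^{m/p}$ by $(2/r)^{(\lfloor p\rfloor+1)/p}$ for $r\geq 2$) yields the uniform bound $\|x^{m,\parx}_{s,t}\|\leq \omega(s,t)^{m/p}/(\beta_p\,(m/p)!)$ with exactly the stated constant. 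Comparing a dissection with a refinement by telescoping over the gaps --- each gap contributing at most $C\,\omega(\cdot)^{m/p}$, and $\sum \omega(\cdot)^{m/p}\leq (\mathrm{mesh})^{m/p-1}\,\omega(s,t)\to 0$ by uniform continuity of $\omega$ --- shows $(x^{m,\parx}_{s,t})$ is Cauchy along refinements, so $x^m_{s,t}:=\lim_{\parx} x^{m,\parx}_{s,t}$ exists, inherits the bound, is continuous (from Chen's identity together with $\omega(t',t)\to 0$ as $t'\to t$), and satisfies Chen at level $m$: splitting a dissection of $[s,t]$ at $u$ and passing to the limit turns $x^{m,\parx}_{s,t}$ into $x^m_{s,u}+x^m_{u,t}+\sum_{k=1}^{m-1}x^k_{s,u}\otimes x^{m-k}_{u,t}$. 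Iterating over $m$ produces the extension. For uniqueness, if two extensions with finite $p$‑variation agree up to level $m-1$, then by Chen their difference $\delta:=x^m-\tilde x^m$ is additive on $\Delta_T$, i.e.\ $\delta_{s,t}=\sum_i\delta_{t_{i-1},t_i}$ for every dissection; since $\|\delta_{a,b}\|\leq C\,\tilde\omega(a,b)^{m/p}$ for a control $\tilde\omega$, one gets $\|\delta_{s,t}\|\leq C\,(\max_i\tilde\omega(t_{i-1},t_i))^{m/p-1}\,\tilde\omega(s,t)\to 0$, forcing $\delta\equiv 0$, and an induction on $m$ finishes it.

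The main obstacle I anticipate is the combinatorial estimate underlying the explicit constant: showing that deleting points in order of cheapest cost, combined with the neoclassical inequality, telescopes down to the precise $\beta_p$ of the statement rather than to some unspecified finite constant. This is exactly where the hypothesis $n\geq\lfloor p\rfloor$, hence $m/p>1$, is indispensable --- it is what makes $\sum_r(2/r)^{m/p}$ summable --- and it requires careful tracking of which $\omega$‑increments are charged at each deletion and of the interplay between the $(i/p)!$ denominators. By contrast, the locality of the point‑removal at level $m$, the Cauchy/limit step, Chen's identity in the limit, the continuity of $x^m$, and the uniqueness argument are all routine once this estimate is in hand.
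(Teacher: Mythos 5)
This theorem is quoted from Lyons (1998) and not proved in the paper itself, so the relevant comparison is with the original argument there. Your proposal faithfully reconstructs that argument — inductive construction of $x^m$ via the degree-$m$ coordinate of dissection products with zero-padding, the point-removal identity, the neoclassical inequality combined with the pigeonhole bound $\omega(u_{j-1},u_{j+1})\leq\tfrac{2}{r}\omega(s,t)$ to make $\sum_r(2/r)^{m/p}$ summable (here is precisely where $m>p$ enters), Cauchy-along-refinements to get the limit, and uniqueness via additivity of the difference — so it is correct and follows essentially the same route.
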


\begin{definition}[Geometric $p$-rough path]
    A geometric $p$-rough path is a $p$-rough path expressed as the limit in $p$-variation of a sequence $(\pi_{\lfloor p\rfloor}(S(x^N)))$ of truncated signatures of bounded variation paths $(x^N)$.
\end{definition}
The space of geometric $p$-rough paths in $\statespace$ is denoted by $G\Omega_p(\statespace)$. We call the unique extension of a geometric $p$-rough path $\tensor{X}\in G\Omega_p(\statespace)$  its signature $$S(\mf{X}{}{}):(s,t)\mapsto (1,x^1_{s,t}, \ldots,x^{\lfloor p\rfloor }_{s,t}, \ldots, x^m_{s,t},\ldots).$$

\medskip
If $\statespace$ is endowed with an inner product $\langle \cdot, \cdot\rangle_1$, and we denote by $\langle \cdot, \cdot\rangle_k$ the canonical (Hilbert-Schmidt) inner product on $\statespace^{\otimes k}$ derived from $\langle \cdot, \cdot\rangle_1$, then an inner product on $T(\statespace)$---the subalgebra of $T((\statespace))$ in which all but finitely many projections are zero---can be defined for any $\tensora, \tensorb \in T(\statespace)$ by
\begin{align*}
    \langle \tensora, \tensorb \rangle := \sum_{k=0}^{\infty}\langle a^k, b^k\rangle_k.
\end{align*}
We note that other choices are possible \cite{cass2024weighted} but will not be considered in this article. With this, a kernel on path space can then be defined as follows.


\medskip\begin{definition}[Signature kernel] Let $p\geq 1$ and $q\geq 1$ be two real numbers. The signature kernel is the map defined for any two geometric $p$- and $q$-rough paths $\mf{X}{}{},\mf{Y}{}{}$ controlled by $w_x$ and $w_y$ respectively, by
    \begin{align}
    \kappa\left(\mf{X}{}{}, \mf{Y}{}{}\right) = \left\langle S(\mf{X}{}{}), S(\mf{Y}{}{})\right\rangle
    \end{align}
\end{definition}
\noindent This kernel is well-defined since $\langle S(\mf{X}{}{}),S(\mf{Y}{}{})\rangle = \sum_{k=0}^{\infty} \langle x^k_{s,t} ,y^k_{u,v}\rangle_{k}$ which is bounded by $\sum_{k=0}^{\infty} \|x^k_{s,t}\|_{\statespace^{\otimes k}}\|y^k_{u,v}\|_{\statespace^{\otimes k}}\leq \sum_{k=0}^{\infty} \frac{\omega_{x}(s,t)^{k/p}}{\beta_p(k/p)!}\frac{\omega_{y}(u,v)^{k/q}}{\beta_q(k/q)!}<+\infty$. 

\medskip
\section{The log-ODE method for the signature RDE}\label{sec:log_ode_signature}

The log-ODE method is an effective method for approximating the solution of an RDE by reducing it to an ODE. This method is based on the log-signature transform, which is the image of a signature under the logarithm map \cref{eq:log}. In this section, we apply the log-ODE method to approximate the signature of a path, and provide error estimates first for paths of bounded variation in \Cref{ssec:smooth} and for geometric $p$-rough paths in \Cref{ssec:rough}.

In the sequel, we consider the Banach algebra $\Banach \subset T((\statespace))$ defined as follows 
\begin{equation*}
    \Banach=\left\{  \tensora\in T((\statespace)) : \| \tensora \|_1 < \infty\right\},
\end{equation*}
where $\|\cdot\|_1$ is $\ell_1$-norm: $\|\tensora\|_1 = \sum_{k=0}^\infty\|a^k\|_{\statespace^{\otimes k}}$. In particular, $\|\tensora\tensorprod\tensorb\|_1 \leq \|\tensora\|_1\|\tensorb\|_1$, $\forall \tensora,\tensorb \in \Banach$.
We will also consider the $\ell_2$-norm: $\|\tensora\|_2 = \sqrt{\sum_{k=0}^\infty\|a^k\|^2_{\statespace^{\otimes k}}}$. Clearly, one has $\|\tensora\|_2 \leq \|\tensora\|_1$. To simplify notation, in the sequel, we write $\log_n(\mathbb A)=\log_n(\pi_{\leq n}\mathbb A)$.
\medskip
\subsection{Smooth case}\label{ssec:smooth}
Let $x \in C^{1\text{-var}}([0,T], \statespace)$ be a continuous path of bounded variation. By standard Picard's arguments, the $\Banach$-valued CDE
\begin{align}\label{ode}
    d\tensor{Z}_t = \tensor{Z
}_t \tensorprod dx_t, \quad \text{started at } \tensor{Z}_s = \tensora \in \Banach,
\end{align}
has a unique solution which coincides with $\tensora \tensorprod S(x)_{s,t}$. The step-$n$ log-ODE method for the CDE (\ref{ode}) on $[s,t]$ reads as
\begin{align}\label{eq:local_sig_ode_dup}
    d\tensor{Z}_u = \tensor{Z}_u \tensorprod \log_nS(x)_{s,t}du, \quad \text{started at } \tensor{Z}_0 = \tensora, 
\end{align}
for all $u \in [0,1]$, with explicit solution given by $\tensor{Z}_1 = \tensora \tensorprod \exp(\log_n S(x)_{s,t})$. The following Lemma, proved in \Cref{ssec:local} states a bound on the local error. 
\begin{lemma}\label{lemma:local_error_smooth}
    Let $x \in C^{1\text{-var}}([0,T], \statespace)$ be a path of bounded variation and let $\mathcal{L}_{s,t}^n(\tensora;x)$ be the solution of the ODE (\ref{eq:local_sig_ode_dup}). Assume that $\|x\|_{1\text{-var},[s,t]} < 1/e$.  Then there exists a constant $C = C(\|\tensora\|_1, \|x\|_{1\text{-var}})$ such that
    \begin{align}
        \|\tensora\tensorprod S(x)_{s,t} - \mathcal{L}_{s,t}^n(\tensora;x)\|_1 \leq C (e\|x\|_{1\text{-var},[s,t]})^{n+1}.
    \end{align}
\end{lemma}

\begin{remark}\label{remark:linear_paths}
   Taking the inner product between the solutions of two step-$1$ log-ODEs (\ref{eq:local_sig_ode_dup}) driven by two bounded variation paths $x$ and $y$ on $[s,s']$ and $[t,t']$ respectively, we obtain that the function defined by $f_{u,v}=\langle \tensor{Z}^x_u, \tensor{Z}^y_v\rangle$ satisfies the PDE 
\begin{equation}\label{eq:pde_log1_dup}
    \frac{\partial^2 f}{\partial u \partial v}=\gamma f, \quad \text{for all }(u,v)\in[0,1]^2,
\end{equation}
where $\gamma:= \sum_{i=1}^{d}x^{(i)}_{s,s'}y^{(i)}_{t,t'}$ with $x^{(i)}_{s,s'}:=x_{s'}^{(i)}-x_s^{(i)}$ is the $i$-th coordinate of the path $x$ on the interval $[s,s']$. This PDE is of the form of the Goursat PDE introduced in \cite{salvi2021signature}. When the two ODEs are started at the identity of $T((V))$, that is when $\mathbb{Z}^x_0=\mathbb{Z}^y_0=\mathds{1}$, the solution of \cref{eq:pde_log1_dup} coincides with the signature kernel of the two linear paths with increments $x^{(i)}_{s,s'}$ and $y^{(i)}_{t,t'}$. 
\end{remark}

\medskip
Given a partition $0=t_0<t_1<\ldots<t_N=T$ of the time interval $[0,T]$, the approximation scheme based on the log-ODE method consists in solving iteratively ODEs of the form (\ref{eq:local_sig_ode_dup}) over the successive time intervals in the partition, i.e. iterating for all $i=0, \ldots, N-1$, the map $$\tensora_{i+1}=\mathcal{L}^{n}_{t_i,t_{i+1}}(\tensora_i;x), \quad\text{started at } \tensora_0=\tensora.$$  Although we have stated only a lemma for the local error so far, a global rate of convergence for the aforementioned scheme will follow by “patching together” these local error estimates over the partition of $[0,T]$. We will carry out the detailed derivation of this global error estimate in the next section, where we extend the scheme to arbitrary geometric $p$-rough paths. In this context, the error estimates will be expressed in terms of a control function, and we emphasize that one can recover the bounded variation case simply by setting $p=1$.

\subsection{Rough case}\label{ssec:rough}
\noindent 
Let $p \geq 1$ and let $\tensor{X} \in G\Omega_p(\statespace)$ be a geometric $p$-rough path. Let $(x^N)$ be a sequence of continuous bounded variation paths such that $(\pi_{\leq \lfloor p \rfloor} S(x^N))$ converges in $p$-variation to $\tensor{X}$. Then, by \cite[Theorem 10.57]{friz2010multidimensional}, there exists a unique $\tensor{Z} \in C^{p\text{-var}}([0,1], \Banach)$ such that the sequence $(\tensor{Z}^N)$ in $C^{1\text{-var}}([0,1],\Banach)$ defined by
\begin{equation}\label{seq_ode}
    d\tensor{Z}^N_t = \tensor{Z}^N_t \tensorprod dx^N_t, \quad \text{started at } \tensora \in \Banach,
\end{equation}
converges uniformly to $\tensor{Z}$ on $[0,1]$ as $N \to \infty$. We say $\tensor{Z}$ is the solution of the RDE
\begin{equation}\label{eq:rde}
    d\tensor{Z}_t = \tensor{Z}_t \tensorprod d\tensor{X}_t, \quad \text{started at } \tensora \in \Banach.
\end{equation}
This limit $\tensor{Z}$ coincides with the unique extension $S(\tensor{X})$ of the rough path $\tensor{X}$ to $T((\statespace))$ as given by \cite[Theorems 3.7]{lyons2004differential}.  In particular, if $\tensor{X}$ is controlled by a control $\omega$, then for any $i \in \mathbb N$ 
\begin{equation}\label{p-var_bound}
\|S(\tensor{X})^i_{s,t}\|_{\statespace^{\otimes i}} \leq \frac{\omega(s,t)^{i/p}}{\beta_p(i/p)!}, \quad \text{where } \beta_p \;=\; p^2 \biggl( 1 \;+\; \sum_{r=3}^{\infty} \bigl(\tfrac{2}{r-2}\bigr)^{\frac{|p|+1}{p}}\biggr).
\end{equation}
As in the smooth case, the step-$\lfloor p \rfloor$ log-ODE method approximation of the solution of the RDE (\ref{eq:rde}) is obtained by solving the ODE 
\begin{equation}\label{rough_logode}
    d\tensor{Z}_u = \tensor{Z}_u \tensorprod\log_{\lfloor p \rfloor} S(\tensor{X})_{s,t}du, \quad \text{started at } \tensor{Z}(0) = \tensora,
\end{equation}
for $u \in [0,1]$. The approximation is given by $\tensor{Z}_{1}$, which is denoted by $\mathcal{L}^{\lfloor p \rfloor}_{s,t}(\tensora; \tensor{X})$. 
\begin{lemma}\label{lemma:local_error}
Let $p \geq 1$ and assume that $(e\omega(s,t))^{1/p} < 1$. Then, there exists a constant $C = C(p, \|\tensora\|_1, \omega(s,t))$ such that   
\begin{align}
    \|\tensora\tensorprod S(\tensor{X})_{s,t} - \mathcal{L}^{\lfloor p \rfloor}_{s,t}(\tensora; \tensor{X})\|_1 &\leq C(e\omega(s,t))^{(n+1)/p}
\end{align}
\end{lemma}
The proof of \Cref{lemma:local_error} is given in \Cref{ssec:local}.
\medskip 

Let $0=t_0<t_1<\ldots <t_N=T$ be a partition of $[0,T]$. Define approximations $\widetilde{\tensor{Z}}_{i}$ for $\tensor{Z}_{t_i}$ in the following way. Start with $\widetilde{\tensor{Z}}_{0}=\tensora$ and for every step, solve the ODE
\begin{equation*}
    d\tensor{Z}_u = \tensor{Z}_u \tensorprod\log_n S(\mf{X}{}{})_{t_i, t_{i+1}}du, \quad \text{started at } \tensor{Z}(0) = \widetilde{\tensor{Z}}_{i}
\end{equation*}
for $u \in [0,1]$. Let $\widetilde{\tensor{Z}}_{i+1}=Z(1)$. The ODE has an explicit solution given by
\begin{equation*}
   \tensor{Z}(1) = \widetilde{\tensor{Z}}_{i}\tensorprod \exp(\log_n S(\mf{X}{}{})_{t_i, t_{i+1}}).
\end{equation*}
Repeat the ODE approximations
successively over each $[t_i, t_{i+1}]$ to get
\begin{equation*}
\widetilde{\tensor{Z}}_{N}=\prod_{i=0}^{N-1}\exp(\log_nS(\mf{X}{}{})_{t_i, t_{i+1}}).
\end{equation*}
We prove in \Cref{ssec:global} the following bound on the global error.
\begin{lemma}\label{lemma:global_error} Let $p \geq 1$ be real number. Consider an integer $n\geq \lfloor p\rfloor$ and a partition $\{0=t_0< t_1< \ldots< t_N=T\}$ of $[0,T]$ such that $(e\omega(t_i,t_{i+1}))^{1/p} < 1$ for all $i=0, \ldots, N-1$. Then, there exists a constant $C = C(\omega(0,T),p)$ such that   
\begin{align}
    \|S(\tensor{X})_{0,T} - \widetilde{\tensor{Z}}_{N}\|_1 &\leq  C\max_{i=0, \ldots, N-1} (e\omega(t_i,t_{i+1}))^{(n+1)/p-1}
\end{align}
\end{lemma}

As we did in \Cref{remark:linear_paths} in the case where $p<2$ and $n=1$, in view of constructing approximation schemes for the signature kernel, we will now consider the function given by the inner product of two step-$n$ log-ODE approximations of the signatures of two geometric $p$-rough paths. The essential question is: does this function still solve a PDE when $n>1$? We will start with the case $n=2$ in \Cref{sec:case_two} and show that the answer is yes, thereby establishing our main result. We will then extend the argument to all $n>1$ in \Cref{sec:general}.

\section{Main result}\label{sec:case_two}
From now on, we consider the simple case where $p \in [1, 3)$ and $n = 2$. Expanding the step-$2$ log-signature coordinate-wise, the step-$2$ log-ODE method in integral form reads
\begin{equation}\label{logode-2}
    \tensor{Z}_1 = \tensor{Z}_0 + \int_0^1 \tensor{Z}_u \tensorprod \Big(\sum_{i=1}^d \tilde{x}^{(i)}_{s,t} e_i + \sum_{i=1}^d\sum_{j=1}^d \tilde{x}^{(i,j)}_{s,t}e_i\otimes e_j\Big) du.
\end{equation}
where we have written $\log_2S(\tensor{X})_{s,t}=\left(0,(\tilde{x}_{s,t}^{(i)})_{i=1}^{d},(\tilde{x}_{s,t}^{(i,j)})_{i,j=1}^{d}\right)$. In the sequel, to ease notation, we write $x^{(i)}_{s,t}=\tilde{x}^{(i)}_{s,t}$ and $x^{(i,j)}_{s,t}=\tilde{x}^{(i,j)}_{s,t}$. 
Taking the inner product between two such solutions, we now obtain our main result. We defer the proof to \Cref{ssec:main_results_proofs}.
\begin{theorem}\label{thm:pde}
Let $V$ be a vector space of dimension $d$. Consider two $p$-rough paths $\tensor{X},\tensor{Y} \in G\Omega_p(\statespace)$ and denote by $\tensor{Z}^x, \tensor{Z}^y$ the solutions of the step-$2$ log-ODEs (\ref{logode-2}) on $[s,s']$ and $[t,t']$, started at $\tensor{Z}^x_0=\tensora$ and $\tensor{Z}^y_0=\tensorb$ respectively, that is $\tensor{Z}^x_1=\mathcal{L}^2_{s,s'}(\tensora, \tensor{X})$ and $\tensor{Z}^y_1=\mathcal{L}^2_{t,t'}(\tensorb, \tensor{Y})$ respectively. Then, the following real-valued variables
\begin{align}
    f_{u,v}:= \left\langle \tensor{Z}_u^x, \tensor{Z}_v^y \right\rangle, \quad \varphi^{(i)}_{u,v}:= \left\langle \tensor{Z}_u^x \tensorprod e_i, \tensor{Z}_v^y \right\rangle, \quad \psi^{(i)}_{u,v}:= \left\langle \tensor{Z}_u^x, \tensor{Z}_v^y \tensorprod e_i \right\rangle
\end{align}
for $i=1, \ldots, d$, satisfy the following system of PDEs
    \begin{align}\label{eq:pde_log2}
    \frac{\partial^2f}{\partial u \partial v}  &= \gamma f + \sum_{j=1}^d \alpha_j \varphi^{(j)} +  \sum_{j=1}^d \beta_j \psi^{(j)} \\
    \frac{\partial \varphi^{(j)}}{\partial v} &= y^{(j)}_{t,t'} f + \sum_{i=1}^d y^{(i,j)}_{t,t'}\psi^{(i)}\\
    \frac{\partial \psi^{(j)}}{\partial u} &= x^{(j)}_{s,s'} f + \sum_{i=1}^d x^{(i,j)}_{s,s'} \varphi^{(i)}.
\end{align}
for all $(u,v)\in [0,1]\times[0,1]$  with boundary conditions 
\[ \begin{matrix}%
f_{0,v}= \langle \tensora, \tensor{Z}^y_v\rangle,    & \varphi^{(i)}_{0,v}=\left\langle \tensora \tensorprod e_i, \tensor{Z}_v^y \right\rangle, & \psi^{(i)}_{0,v}= \left\langle \tensora, \tensor{Z}_v^y \tensorprod e_i \right\rangle\\ 
f_{u,0}=\langle \tensor{Z}^x_u, \tensorb\rangle,  &  \psi^{(i)}_{u,0}= \left\langle \tensor{Z}_u^x, \tensorb \tensorprod e_i \right\rangle, & \varphi^{(i)}_{u,0}= \left\langle \tensor{Z}_u^x \tensorprod e_i, \tensorb \right\rangle 
\end{matrix}\]
and where
\begin{equation*}
    \alpha_j := \sum_{i=1}^d x^{(i,j)}_{s,s'}y^{(i)}_{t,t'}, \quad \beta_j := \sum_{i=1}^d x^{(i)}_{s,s'}y^{(i,j)}_{t,t'}, \quad 
    \gamma = \sum_{i=1}^{d}x^{(i)}_{s,s'}y^{(i)}_{t,t'}+\sum_{i,j=1}^{d}x^{(i,j)}_{s,s'}y^{(i,j)}_{t,t'}.
\end{equation*}
\end{theorem}
\begin{remark}
    In \Cref{remark:linear_paths}, when we applied the step-$1$ log-ODE method to the signature RDEs and taking the inner product of the solutions, we obtained a PDE \cref{eq:pde_log1} that has the same form as the original Goursat PDE \cite{salvi2021signature}. Here, when applying the step-$2$ method, we have obtained a new system of PDEs. The equation for $f_{u,v}$ has the same form as the original hyperbolic PDE, albeit with additional forcing terms, that involve additional state variables. 
\end{remark}
We emphasize that while the kernel of a pair of bounded variation paths is the solution of the original Goursat PDE, its solution can be approximated by solving \cref{eq:pde_log1_dup} or by solving the bigger system of \Cref{thm:pde}, yielding two schemes with different error rates. Before stating them we establish the uniqueness of the solution of the PDE in \Cref{thm:pde}.
\medskip


Let $\{0=u_0<u_1<...<u_m=1\}$ and $\{0=v_0<v_1<...<v_n=1\}$ be two partitions of the interval $[0,1]$. Define the integral operator 
$$\mathcal{T}: (C([0,1]^2, \mathbb R^{2d+1}), \|\cdot\|_\infty) \to (C([0,1]^2, \mathbb R^{2d+1}), \|\cdot\|_\infty)$$
such that
\begin{align*}
    [\mathcal{T} h(s,t)]_1 &= \int_0^s \int_0^t \!\!\left( \gamma_{u,v} [h(u,v)]_1 + \alpha^\top_{u,v} [h(u,v)]_{2:d+1} +  \beta^\top_{u,v} [h(u,v)]_{d+2:2d+1} \right)\!dudv 
\end{align*}
and 
\begin{align*}
    [\mathcal{T} h(s,t)]_{2:d+1} &= \int_0^t \left([h(s,v)]_{1} y^1_{s,v} + {y^2_{s,v}\!\!}^\top [h(s,v)]_{d+2:2d+1} \right) dv \\
    [\mathcal{T} h(s,t)]_{d+2:2d+1} &=  \int_0^s \left([h(u,t)]_{1} x^1_{u,t} + {x^2_{u,t}\!\!}^\top [h(u,t)]_{2:d+1} \right) du
\end{align*}
where $\gamma \in \mathbb R$, $\alpha, \beta, x^1, y^1 \in \mathbb R^d$ and $ x^2, y^2\in \mathbb R^{d\times d}$ are the piecewise constant coefficients in \Cref{thm:pde}. Then, the system of PDEs in \Cref{thm:pde} is equivalent to
\begin{equation}\label{eqn_phi}
    (\text{id} - \mathcal{T})\Phi = h_0
\end{equation}
where $\Phi = (f, \varphi^1, ..., \varphi^d, \psi^1,...,\psi^d) \in C([0,1]^2,\mathbb R^{2d + 1})$ and $h_0 \equiv (1,a,b)$, where $a,b\in \mathbb R^d$. 
\medskip
We prove the following theorem in \Cref{ssec:main_results_proofs}.
\begin{theorem}\label{thm:uniqueness}
    Assume for all $s,t \in [0,1]$
\[ \begin{matrix}
    \|h_0(s,t)\| \leq h_0, & |\gamma_{s,t}| \leq K_1, & \|\alpha_{s,t}\| \leq K_2, & \|\beta_{s,t}\| \leq K_3 \\
     \|x^1_{s,t}\| \leq K_4, & \|y^1_{s,t}\| \leq K_5, & \|x^2_{s,t}\|_{\text{op}} \leq K_6, & \| y^2_{s,t}\|_{\text{op}} \leq K_7.
\end{matrix}\]%
Then, the system of PDEs in \Cref{thm:pde} admits a unique solution.
\end{theorem}

Although the explicit PDEs for $n>2$ will be derived in the next section, we can already assert the following global error estimate (proved in \Cref{ssec:main_results_proofs}), as the proof  relies on \Cref{lemma:global_error} without invoking any of the PDEs derived later. 
\begin{theorem}\label{thm:global_error_second_order_pde}
Let $p \geq 1$ be a real number and $n\geq \lfloor p\rfloor$ be an integer. Consider two $p$-rough paths $\tensor{X},\tensor{Y} \in  G\Omega_p(\statespace)$ controlled by $\omega_x$ and $\omega_y$ respectively. Let $N$ and $N'$ be two integers such that $(ew_x(0,T)/N)^{1/p}<1$ and $(ew_y(0,T)/N')^{1/p}<1$. 
Consider a partition $\{s_i\}_{i=1}^{N}$ of $[0,T]$ such that $w_x(s_i,s_{i+1})\leq w_x(0,T)/N$. Similarly, consider a partition $\{t_i\}_{i=1}^{N'}$ of $[0,T]$ such that $w_y(t_i,t_{i+1})\leq w_y(0,T)/N'$. Then, there exists a constant $C = C(\omega_x(0,T), \omega_y(0,T),p)$ such that   
\begin{align}
    \left|\langle S(\tensor{X}), S(\tensor{Y}) \rangle - f(T, T)\right| \leq C\max\bigg\{\left(\frac{e\omega_x(0,T)}{N}\right)^{\frac{n+1}{p}-1}\!, \left(\frac{e\omega_y(0,T)}{N'}\right)^{\frac{n+1}{p}-1}\bigg\}
\end{align}
\end{theorem}




    

    

\section{The general case}\label{sec:general}






To formulate the result in the general case $n\geq 1$, we first recall the definitions of the adjoints of the linear maps given by the left and right tensor multiplication in the tensor algebra by a tensor $\tensora\in T((V))$. There is a natural pairing $[\cdot, \cdot]$ between the infinite formal series $T((\statespace))$ and the finite sequences in $T(\statespace^*)$, given for all $\tensora\in T((\statespace))$ and all $\tensorb\in T(\statespace^*)$ by 
$$[\tensora, \tensorb]=\left[(a^0, a^1, a^2, \ldots), (b^0, b^1, \ldots, b^m)\right]=\sum_{k=0}^{m}[a^k, b^k]_k$$
where $a^k\in \statespace^{\otimes k}$ and $b^k\in (\statespace^*)^{\otimes k}$ and the pairing $[\cdot,\cdot]_k$ is the canonical one induced by the pairing between $\statespace$ and $\statespace^*$. We will say that a pairing $[\cdot,\cdot]$ between $E$ and $F$ is non-degenerate if $[e,f]=0$ for all $e$ implies $f=0$. The pairing between $T((\statespace))$ and $T(\statespace^*)$ is non-degenerate in both directions.

\begin{definition}
    Let $E,F$ be vector spaces with a pairing $[e,f]$ for $e\in E$ and $f\in F$. Suppose $T$ is a linear operator from $E$ to $E$, and $T^*$ is a linear operator from $F$ to $F$. Then, we say that $T^*$ is adjoint to $T$ if 
$$[e, T^*(f)]=[T(e),f]$$
for all $e\in E$ and $f\in F$. 
\end{definition} 
Provided the pairing between $E$ and $F$ is non-degenerate, the adjoint $T^*$ of $T$ is always unique if it exists.

\subsection{Adjoint of tensor multiplication}

Initially, let $a^k=e_{i_1}\otimes \ldots \otimes e_{i_k}$ in $\statespace^{\otimes k}$. We can write a basis of $T(\statespace^*)$ in terms of words $e^*_{i_1}\otimes\ldots\otimes e^*_{i_m}$. We define a function $\ladjmap{a^k}$ on this basis as follows. Given a word $u=e^*_{i_1}\otimes\ldots\otimes e^*_{i_k}\otimes \tau$ that begins with the sequence $e^*_{i_1}\otimes\ldots\otimes e^*_{i_k}$, the function $\ladjmap{a^k}$ takes the value $\tau$. 
Given a basis element $v$ whose initial sequence is not $e^*_{i_1}\otimes\ldots\otimes e^*_{i_k}$, the function $\ladjmap{a^k}$ maps the word $v$ to zero, that is,  
\begin{equation}
    \ladjmap{a^k}(e^*_{j_1}\otimes\ldots \otimes e^*_{j_n})=\left\{
  \begin{array}{ll}
    e^*_{j_{n-k}}\otimes\ldots\otimes e^{*}_{j_n} & \text{if } n\geq k,~ j_1=i_1, \ldots, j_k=i_k\\
    0,  & \mbox{otherwise}.
  \end{array}
\right.
\end{equation}
A function on a vector space defined on a basis uniquely extends to a linear map on the space. The map $\ladjmap{\tensora}$ from $T(\statespace^*)$ to itself is the adjoint of the linear map $L_\tensora:\tensor{X}\mapsto \tensora\tensorprod\tensor{X}$ from  $(T((V)),\tensorprod)$ to itself. 

The function is zero for any element $\tensorb=(b^0, b^1, \ldots, b^m)$ of $T(\statespace^*)$ if $k>m$. Hence, only finitely of the $e_{i_1}\otimes\ldots\otimes e_{i_k}$ contribute, allowing us to extend the linear map unambiguously to a linear map of  $T((\statespace))$ to linear maps from $T(\statespace^*)$ to $T(\statespace^*)$.

\begin{remark}
    Sometimes it is convenient to identify words of $T(\statespace^*)$ with words of $T((\statespace))$, and consider the map $\ladjmap{\tensora}$ as running from $T((\statespace))$ to $T((\statespace))$. We note that this map is well defined from signatures to $T((\statespace))$, because the coefficients on the right handside converge by the factorial decay of the signature. See \Cref{ssec:adjoints_proofs} for a proof.
\end{remark}

Similarly, we define the adjoint of the linear map $R_{\tensora}:\tensor{X}\mapsto \tensor{X}\tensorprod \tensora$ from $(T((\statespace)), \tensorprod)$ to itself, by initially considering $a^k=e_{i_1}\otimes \ldots \otimes e_{i_k}$ in $\statespace^{\otimes k}$ and the map $\radjmap{a^k}$ that given a word $u=\tau\otimes e^{i_1}\otimes\ldots\otimes e^{i_k}$ that ends with the sequence $e^*_{i_1}\otimes\ldots\otimes e^*_{i_k}$, the function $\radjmap{a^k}$ takes the value $\tau$. Given a basis element $v$ whose last sequence is not $e^*_{i_1}\otimes\ldots\otimes e^*_{i_k}$, the function $\radjmap{a^k}$ maps the word $v$ to zero. 

\begin{equation}
    \radjmap{a^k}(e^*_{j_1}\otimes\ldots \otimes e^*_{j_n})=\left\{
  \begin{array}{ll}
    e^*_{j_{1}}\otimes\ldots\otimes e^{*}_{j_{n-k-1}} & \text{if } n\geq k,~ j_{n-k}=i_1, \ldots, j_n=i_k\\
    0,  & \mbox{otherwise}.
  \end{array}
\right.
\end{equation}

\subsection{Rewriting the PDE from \Cref{thm:pde}} With this, we see that the variables appearing in the PDE system obtained in \Cref{thm:pde}, governing the inner product of the solutions of two step-$2$ log-ODE solutions $\tensor{Z}^x_u$ and $\tensor{Z}^y_v$, correspond to coordinates of $\ladj{\tensor{Z}_u^x}{\tensor{Z}_v^y}$ and $\ladj{\tensor{Z}_v^y}{\tensor{Z}_u^x}$, namely $$\varphi^{(i)}_{u,v}:= \big\langle \ladj{\tensor{Z}_u^x}{\tensor{Z}_v^y}, e_i \big\rangle, \quad \psi^{(i)}_{u,v}:= \big\langle\ladj{\tensor{Z}_v^y}{\tensor{Z}_u^x},  e_i\big\rangle.$$
Therefore, we can restate the result as follows. The variables
\begin{align}
    f_{u,v}&:= \left\langle \tensor{Z}_u^x, \tensor{Z}_v^y \right\rangle, \quad
    \varphi_{u,v}:= \pi_1(\ladj{\tensor{Z}_u^x}{\tensor{Z}_v^y}), \quad \psi_{u,v}:= \pi_1(\ladj{\tensor{Z}_v^y}{\tensor{Z}_u^x})
\end{align}
satisfy the following system of PDEs
    \begin{align}
    \frac{\partial^2f}{\partial u \partial v}  &= \gamma f + \langle \alpha, \varphi\rangle_1 + \langle \beta, \psi\rangle_1 \\
    \frac{\partial \varphi}{\partial v} &= y^1_{t,t'} f +\ladj{\psi}{y^{2}_{t,t'}}\\
    \frac{\partial \psi}{\partial u} &= x^1_{s,s'} f+  \ladj{\varphi}{x^{2}_{s,s'}} .
\end{align}
with appropriate boundary conditions
and coefficients $\alpha, \beta$ and $\gamma$ given by
\begin{align*}
    \alpha := \radj{y^{1}_{t,t'}}{x^{2}_{s,s'}}, \quad \beta := \radj{x^{1}_{s,s'}}{y^{2}_{t,t'}}, \quad 
    \gamma = \sum_{k=1}^{n}\left\langle x^{k}_{s,s'}, y^{k}_{t,t'}\right\rangle_k.
\end{align*}

\subsection{The step-$n$ log-ODE method for the signature kernel}\label{ssec:new_pde}

The following theorem (proved in \Cref{sec:generic_n}) generalises our main result beyond the case $n=2$. 
\begin{theorem}\label{thm:pde_genericn}
Consider two $p$-rough paths $\tensor{X},\tensor{Y} \in  G\Omega_p(\statespace)$. Let $n\geq \lfloor p \rfloor$ and denote by $\tensor{Z}^x, \tensor{Z}^{y}$ the solutions of the step-$n$ log-ODEs (\ref{logode-2}) started at $\tensor{Z}^x_0=\tensora,\tensor{Z}^y_0=\tensorb$. Then, the scalar variables
\begin{align}
    f_{u,v}&:= \left\langle \tensor{Z}_u^x, \tensor{Z}_v^y \right\rangle, \quad \varphi^0_{u,v}:= 0, \quad \psi^0_{u,v}:= 0
\end{align}
and the $\statespace^{\otimes k}$-valued variables 
\begin{align}
    \varphi^k_{u,v}&:= \pi_k(\ladj{\tensor{Z}_u^x}{\tensor{Z}_v^y}), \quad \psi^k_{u,v}:= \pi_k(\ladj{\tensor{Z}_v^y}{\tensor{Z}_u^x}) \quad \text{for }k=1, \ldots, n-1
\end{align}
satisfy the following system of PDEs
    \begin{align}
    \frac{\partial^2f}{\partial u \partial v}  &= \gamma f + \sum_{k=0}^{n-1}\langle \alpha^{k}, \varphi^{k}\rangle_k +  \sum_{k=0}^{n-1}\langle \beta^{k}, \psi^{k}\rangle_k \\
    \frac{\partial \varphi^k}{\partial v} &= y^k_{t,t'} f + \sum_{p=0}^{k}\varphi^{p}\otimes y^{k-p}_{t,t'}+\sum_{p=0}^{n-k}\ladj{\psi^{p}}{y^{k+p}_{t,t'}}-\sum_{p=0}^{k}\langle y^p_{t,t'},\psi^p\rangle_p\delta_{k,0} \\
    \frac{\partial \psi^k}{\partial u} &= x^k_{s,s'} f+ \sum_{p=0}^{k}\psi^{p}\otimes x^{k-p}_{s,s'}+ \sum_{p=0}^{n-k}\ladj{\varphi^{p}}{x^{k+p}_{s,s'}} - \sum_{p=0}^{k}\langle x^p_{s,s'},\phi^p\rangle_p\delta_{k,0}.
\end{align}
for all $(u,v)\in[0,1]\times[0,1]$ with boundary conditions
\[ \begin{matrix}%
f(0,v)= \langle \tensora, \tensor{Z}^y_v\rangle,    & \psi^k(0,v)=\pi_k(\ladj{\tensor{Z}^y_v}{\tensora}),& \varphi^k(0,v)= \pi_k(\ladj{\tensora}{\tensor{Z}^y_v}),\\
f(u,0)=\langle \tensor{Z}^x_u, \tensorb\rangle,  &  \varphi^k(u,0)= \pi_k(\ladj{\tensor{Z}^x_u}{\tensorb}), & \psi^k(u,0)= \pi_k(\ladj{\tensorb}{\tensor{Z}^x_u}).
\end{matrix}\]%
and where
\begin{equation*}
    \alpha^k := \sum_{p=1}^{n-k}\radj{y^{p}_{t,t'}}{x^{p+k}_{s,s'}}, \quad \beta^k := \sum_{p=1}^{n-k} \radj{x^p_{s,s'}}{y^{p+k}_{t,t'}}, \quad 
    \gamma = \sum_{k=1}^{n}\left\langle x^k_{s,s'}, y^{k}_{t,t'}\right\rangle_k.
\end{equation*}
\end{theorem}

Since one can take any $n\geq \lfloor p \rfloor$, and since the two rough paths could a have different parameter $p$, there is no reason why one could not approximate the signature of $\mathbb X$ using a step-$n$ log-ODE and the signature of $\mathbb Y$ using a step-$n'$ log-ODE with $n'\neq n$. It is easy to generalise the result above to this setting. 

The final step to produce a numerical scheme for approximating the signature kernel is to discretise the PDEs. To maintain the rate of convergence, this discretisation must yield an accurate enough approximation of the solution. A discretization method is given in \Cref{alg:full_algo}, where for simplicity, we take a single step for each PDE. We leave as future work the verification that the chosen discretisation ensures that the convergence rate remains optimal.

  \begin{algorithm}[h]
    \caption{PDE discretization method}
    \label{alg:full_algo}
        \begingroup
\setstretch{1.3} 
    \begin{algorithmic}[1]
    \vspace{3pt}
    \State \textbf{Input:} Truncated log-signatures over a partition $\logsig{x}{i}\in \mathcal{L}^n(\mathbb R^d)$ for $i=1,\ldots,N$ and $\logsig{y}{j}\in \mathcal{L}^n(\mathbb R^d)$ for $j=1,\ldots,N'$.  Initial conditions $f_{0,\cdot}, f_{\cdot,0}, \varphi_{0,\cdot}, \varphi_{\cdot,0}, \psi_{0,\cdot}, \psi_{\cdot,0}$. 
    \For{$i$ in $0,\ldots,N-1$} 
     \For{$j$ in $0,\ldots,N'-1$} 
     \State $\widetilde{\logsig{x}{i}}, \widetilde{\logsig{y}{j}}\leftarrow \pi_{n-1}(\logsig{x}{i}), \pi_{n-1}(\logsig{y}{j})$
     \State \textit{// Update the adjoint states}
   \State $\varphi_{i+1,j+1}\leftarrow \varphi_{i,j+1} + f_{i,j} \widetilde{\logsig{x}{i}} + \varphi_{i,j+1}\tensorprod\widetilde{\logsig{x}{i}}+\ladj{\psi_{i,j+1}}{\logsig{x}{i}} - \langle\psi_{i,j+1}, \widetilde{\logsig{x}{i}}\rangle $
   \State $\psi_{i+1,j+1}\leftarrow \psi_{i+1,j} + f_{i,j} \widetilde{\logsig{y}{j}} + \psi_{i+1,j}\tensorprod\widetilde{\logsig{y}{j}} + \ladj{\varphi_{i+1,j}}{\logsig{y}{j}} - \langle\varphi_{i+1,j}, \widetilde{\logsig{y}{j}}\rangle $
  \State \textit{// Intermediate states}
    \State $h_1\leftarrow f_{i,j} \langle \logsig{x}{i}, \logsig{y}{j}\rangle +  \langle \varphi_{i,j}, \radj{\logsig{x}{i}}{\logsig{y}{j}}\rangle + \langle \psi_{i,j}, \radj{\logsig{y}{j}}{\logsig{x}{i}}\rangle$
   \State $h_2\leftarrow f_{i,j+1} \langle \logsig{x}{i}, \logsig{y}{j}\rangle +  \langle \varphi_{i,j+1}, \radj{\logsig{x}{i}}{\logsig{y}{j}}\rangle + \langle \psi_{i,j+1}, \radj{\logsig{y}{j}}{\logsig{x}{i}}\rangle$
  \State $h_3\leftarrow f_{i+1,j} \langle \logsig{x}{i}, \logsig{y}{j}\rangle +  \langle \varphi_{i+1,j}, \radj{\logsig{x}{i}}{\logsig{y}{j}}\rangle + \langle \psi_{i+1,j}, \radj{\logsig{y}{j}}{\logsig{x}{i}}\rangle$
\State $f^p\leftarrow f_{i+1,j} + f_{i,j+1} - f_{i,j} + h_1$
 \State $h_4\leftarrow  f^p  \langle \logsig{x}{i}, \logsig{y}{j}\rangle +  \langle \varphi_{i+1,j+1}, \radj{\logsig{x}{i}}{\logsig{y}{j}}\rangle + \langle \psi_{i+1,j+1}, \radj{\logsig{y}{j}}{\logsig{x}{i}}\rangle$
  \State \textit{// Update the kernel state}
    \State  $f_{i+1,j+1} =  f_{i+1,j} + f_{i,j+1} - f_{i,j} + (1./4)(h_1 + h_2 + h_3 + h_4)$ 
    \EndFor 
    \EndFor    
    \State\Return $f_{N,N'}$. 
    \end{algorithmic}
    \endgroup
\end{algorithm}

If the data takes the form of time series $x_{1},\ldots,x_{m}$ with $x_{i}\in \mathbb R^d$, one first needs to construct the higher order description. In this case, the input of \Cref{alg:full_algo} might be obtained by embedding the data into a continuous path $x:[0,T]\to \mathbb R^d$ and then computing the log-signatures $\log_n S(x)_{s_i,s_{i+1}}$ of the path on
each segment of the partition $\{0=s_0< \ldots < s_{N}=T\}$. Such constructions are straightforward using Python packages such as \texttt{esig}, \texttt{iisignature}, \texttt{signatory}, \texttt{signax} \cite{esig,reizenstein2018iisignature,kidger2020signatory,signax} or \texttt{RoughPy} \cite{roughpy}. It is worth noting that for small degrees $n=1$ and $n=2$, the equations can still be described in terms of matrix-vector and dot products, and the numerical schemes can be implemented using native Python and \texttt{NumPy} functions. However, packages designed for working with fundamental objects from free non-commutative algebra (see lines $6-14$ in \Cref{alg:full_algo}) such as \texttt{RoughPy}, significantly streamline the implementation and offer a seamless transition between different schemes. An implementation of the newly proposed methods for computation of signature kernels is made accessible at \url{https://github.com/maudl3116/high-order-sigkernel}.

\begin{remark}\label{remark:PAB}
It can be shown \cite[Theorem 2.]{friz2009rough} that the step-$n$ log-ODE method approximation coincides with the solution of a RDE driven by a \emph{piecewise log-linear path} of degree $n$ w.r.t a partition $\part=\{t_i\}_{i=0}^{N}$ of $[0,T]$. Considering the signature RDE (\ref{eq:rde}) driven by the $p$-rough path $\mathbb X$, its solution is approximated by replacing $\mathbb X$ by the path $\mf{X}{}{n,\part}$ of Lie degree $n$ (with $n\geq \lfloor p \rfloor$) defined by $\mathbb{X}^{n,\part}_0=\mathbb{X}_0$ and for any $t_i\leq s<t\leq t_{i+1}$ by
\begin{align}
    \mf{X}{s,t}{n,\part}=
    \exp_n\left(\frac{t-s}{t_{i+1}-t_i}\log_nS(\mathbb X)_{t_i, t_{i+1}}\right)
\end{align}
and extended to be multiplicative on $s,t\in[0,T]$. 
This generalises classical piecewise linear approximations. We refer to these paths as \emph{piecewise log-linear path} \cite{bellingeri2022smooth}, and note that in the literature,  they are also referred to as “piecewise abelian paths” in \cite{flint2015pathwise} and as “pure rough paths” in \cite{boedihardjo2020path}. Therefore, the state variables in the PDE system \Cref{thm:pde_genericn} can be re-expressed in terms of the signatures of piecewise log-linear approximations. In particular $f_{u,v}=\langle \mathbb Z^x_u, \mathbb Z^y_v\rangle$ is the signature kernel of $\mathbb{X}^{n,D}$, $\mathbb{Y}^{n,D'}$, i.e. $f_{u,v}=\langle S(\mathbb{X}^{n,D}),  S(\mathbb{Y}^{n,D'})\rangle$. 
\end{remark}

\subsection{The signature kernel of smooth rough paths}
So far, we have derived new numerical schemes for approximating the signature kernel of $p$-rough paths, that involve solving systems of PDEs. We saw in \Cref{remark:PAB} that these log-PDE schemes can be obtained by replacing the driving signals by piecewise log-linear paths. In this section, we show that the signature kernel of smooth rough paths, which form a dense subset of the space of $p$-rough paths \cite{bellingeri2022smooth}, solves an analogous PDE system whose coefficients are smooth functions given by higher order iterated integrals of the driving signals. Equivalently, one may derive these PDEs directly in the smooth rough path setting and then specialise them to the piecewise log-linear approximations, recovering \Cref{thm:pde} and \Cref{thm:pde_genericn} as corollaries.

\medskip 
A level-$n$ smooth geometric rough path ($n$-sgrp)
is a path $\mathbb X: [0,T] \to T^n(\statespace)$ such that the shuffle relation $\langle \mathbb X_t, v\shuffle w\rangle=\langle \mathbb X_t, v\rangle\langle \mathbb X_t, w\rangle $ holds for all times $t\in [0,T]$ and all words $|v|+|w|\leq n$, and 
for any word $w$ of length $|w|\leq n$, the map $t\mapsto \mathbb \langle \mathbb{X}_t,w\rangle$ is smooth. We refer the reader to \cite{bellingeri2022smooth} for the construction and theory of smooth rough paths, including the definition of their signatures. The signature kernel $\langle S(\mf{X}{}{}),S(\mf{Y}{}{})\rangle$ of two level-$n$ smooth rough paths $\mf{X}{}{}$ and $\mf{Y}{}{}$ solves an augmented system of PDEs which generalises the original Goursat problem (retrieved when $n=1$). 
Let $\mf{X}{}{}$ and $\mf{Y}{}{}$ be two level-$n$ smooth rough paths with diagonal derivatives
\begin{align*}
\dot{\srp{x}}_s:=\partial_v|_{v=s}\mf{X}{s,v}{}\in \mathcal{L}^n(\statespace), \quad \dot{\srp{y}}_t:=\partial_v|_{v=t}\mf{Y}{t,v}{}\in\mathcal{L}^n(\statespace)
\end{align*} 
in the space of Lie polynomials of degree $n$. 
The real-valued function indexed on the plane $f:I\times J \to \mathbb R$ and the two tensor-valued functions $\adja: I\times J \to \pi_n(T^{>0}(\statespace)$ and $\adjb: I\times J \to \pi_n(T^{>0}(\statespace)$ defined for all $(s,t)\in I\times J$ by 
    \begin{align*}
    f(s,t)&:=\langle S(\mf{X}{}{})_{0,s},S(\mf{Y}{}{})_{0,t}\rangle \\ 
    \adja(s,t)&:=\pi_n(\ladj{S(\mf{Y}{}{})_{0,t}}{S(\mf{X}{}{})_{0,s}}-f(s,t)\cdot\identity)\\ 
    \adjb(s,t)&:=\pi_n(\ladj{S(\mf{X}{}{})_{0,s}}{S(\mf{Y}{}{})_{0,t}}-f(s,t) \cdot\identity)
    \end{align*}
solve the following system of linear PDE
    \begin{align}
\mixed{f}&= \langle \dot{\srp{x}}_s,\dot{\srp{y}}_t\rangle f  + \langle \adja, \radj{\dot{\srp{x}}_s}{\dot{\srp{y}}_t} \rangle + \langle \adjb, \radj{\dot{\srp{y}}_t}{\dot{\srp{x}}_s} \rangle \label{eq:sgrp_ker}\\ 
\single{s}{\adja}&= \dot{\srp{x}}_sf+\adja\otimes_n \dot{\srp{x}}_s+ \ladj{\adjb}{\dot{\srp{x}}_s} -\langle \adjb, \dot{\srp{x}}_s\rangle\identity_n\label{eq:sgrp_phi}\\
\single{t}{\adjb} &=   \dot{\srp{y}}_tf+\adjb\otimes_n \dot{\srp{y}}_t+\ladj{\adja}{\dot{\srp{y}}_t}-\langle \adja, \dot{\srp{y}}_t\rangle\identity\label{eq:sgrp_psi}
\end{align}
with boundary conditions 
\[ \begin{matrix}
f(0,t)= 1,    & \adjb(0,t)=\mf{Y}{0,t}{}-\identity_n,& \adja(0,t)= 0,\\
f(s,0)=1,  &  \adja(s,0)= \mf{X}{0,s}{}-\identity_n, & \adjb(s,0)= 0.
\end{matrix}\]%

\begin{remark}
    In \cref{eq:sgrp_phi} and \cref{eq:sgrp_psi}, $\otimes_n$ and $\identity_n$ denote the tensor product and the identity in $T^n(\statespace)$ respectively. However, since the truncated adjoint variables $\adja, \adjb$ and the kernel $f$ are the solution of the system \cref{eq:sgrp_ker}, \cref{eq:sgrp_phi} and \cref{eq:sgrp_psi}, for each $n\geq 0$, we can formally write the differential equation for the full adjoints as functions from $I\times J$ to $T^{>0}(V)$, only replacing $\otimes_n$ with $\tensorprod$ and $\identity_n$ with $\identity$ in the system. 
\end{remark}

\begin{remark} Note that the $n^{th}$ degree of $\radj{\dot{\srp{x}}_s}{\dot{\srp{y}}_t}$ and $\radj{\dot{\srp{y}}_t}{\dot{\srp{x}}_s}$ are both zero. Therefore, the inner products in the PDE \cref{eq:sgrp_ker} can be rewritten as 
\begin{align*}
    \langle \adja, \radj{\dot{\srp{x}}_s}{\dot{\srp{y}}_t}\rangle = \langle \pi_{n-1}(\adja), \pi_{n-1}(\radj{\dot{\srp{x}}_s}{\dot{\srp{y}}_t})\rangle\\
    \langle \adjb, \radj{\dot{\srp{y}}_t}{\dot{\srp{x}}_s}\rangle = \langle \pi_{n-1}(\adjb), \pi_{n-1}(\radj{\dot{\srp{y}}_t}{\dot{\srp{x}}_s})\rangle,
\end{align*}
and only depend on the degrees of the adjoint variables that are lower than $n-1$. This is in line with the previous results on signature kernels, since for two $1$-smooth rough paths, with diagonal derivatives $\dot{\srp{x}}_s=(0,\dot{x}_s)$ and $\dot{\srp{y}}_t=(0,\dot{y}_t)$, the reader can check that the system, written component-wise, reads as
\begin{align}
\mixed{f}&= \langle \dot{x}_s,\dot{y}_t\rangle_1 f\label{eq:kertrunc1_compo}\\ 
  \single{s}{\adja^{(k)}}&=\dot{x}^{(k)}_s f \qquad\qquad \text{for all~} k\in\{1,\ldots, d\}\\ 
\single{t}{\adjb^{(k)}}&=\dot{y}^{(k)}_t f\qquad\qquad \text{for all~} k\in\{1,\ldots, d\}.
\end{align}
Since the states $\adja^{(k)}$ and $\adjb^{(k)}$ do not influence the dynamics of $f$, to compute the signature kernel, we can solve \cref{eq:kertrunc1_compo} which corresponds to solving the original Goursat PDE problem \cite{salvi2021signature}. Unless both $\mf{X}{}{}$ and $\mf{Y}{}{}$ are minimal extension of $1$-smooth rough paths, the dynamics of the kernel are influenced by the first degrees of extra state defined in terms of $\ladj{S(\mf{Y}{}{})}{S(\mf{X}{}{})}$ and $\ladj{S(\mf{X}{}{})}{S(\mf{Y}{}{})}$. 
As aforementioned, the dynamics of $f$ only depend on the degrees $k=1,\ldots,n$ of $\adja$ and $\adjb$ because the action of the adjoints $\radjmap{\dot{\bm{x}}_s}$ and $\radjmap{\dot{\bm{y}}_t}$ can only decrease the maximum degree (in fact there are also independent from the degree $n$ of $\adja$ and $\adjb$, because the degree $n$ of $\radj{\dot{\bm{x}}_s}{\dot{\bm{y}}_t}$ is given by $\radj{\dot{x}^{0}_s}{\dot{y}^{n}_t}=\radj{0}{\dot{y}^{n}_t}=0$). Futhermore, the system $(f,\adja^{n},\adjb^{n})$ is closed because the dynamics of $\adja^{n}$ and $\adjb^{n}$ do not depend on degrees strictly higher than $n$ of $\adja$ and $\adjb$.
\end{remark}


\bigskip 
\begin{remark}
Suppose $\mf{X}{}{}$ and $\mf{Y}{}{}$ are two $m$-smooth geometric rough paths. Their diagonal derivatives can be embedded in $\mathcal{L}^{n}(\statespace)$ for some $n>m$. 
However, for $\dot{\bm{x}}_s$ and $\dot{\bm{y}}_t$, all degrees greater than $m+1$ are zero. Denote the solution of this system by $h^n:=(f,\adja^n, \adjb^n)$ and consider $h^{\text{proj}}=(f,\pi_{\leq m}(\adja^n), \pi_{\leq m}(\adjb^n))$. We have $h^{\text{proj}}=h^{m}$. In other words, when minimally extending the input smooth rough paths to higher degrees, we write a bigger set of equations, but if we solve them and project the solution, we get the same state as if we had solved the reduced set of equations, which shows consistency. To fix ideas, let's continue the example in the previous remark. If we extend the $1$-sgrp to $2$-sgrp, then we get  
\begin{align}
   \mixed{f}&=  \langle \dot{x}_s, \dot{y}_t \rangle_1 f\\ 
   \single{s}{\adja^{(k)}}&=\dot{x}^{(k)}_s f \qquad\qquad \text{for all~} k\in\{1,\ldots, d\}\\
    \single{t}{\adjb^{(k)}}&=\dot{y}^{(k)}_t f\qquad\qquad \text{for all~} k\in\{1,\ldots, d\}\\
   \single{s}{\adja^{(k,p)}}&= \dot{x}^{(p)}_s \adja^{(k)}\qquad\qquad \text{for all~} k,p\in\{1,\ldots, d\}\\
   \single{t}{\adjb^{(k,p)}}&= \dot{y}^{(p)}_t \adjb^{(k)}\qquad\qquad \text{for all~} k,p\in\{1,\ldots, d\}
\end{align}
For $1$-sgrps, or any minimal extensions of $1$-sgrps, the dynamics of $f$ are independent from any other state. Otherwise, we need more equations to get $f$.
\end{remark}



\section{Numerical illustration}\label{sec:experiments}
Consider two multivariate Brownian motion sample paths $\bmx,\bmy:[0,1]\to \mathbb R^d$. Almost surely $\bmx$ and $\bmy$ have infinite
two-variation and finite $p$-variation for every $p > 2$. Therefore, it is not possible to define their signature kernel as the solution of the Goursat PDE in \cite{salvi2021signature}. However, any two piecewise linear approximations of $\bmx$ and $\bmy$ associated with any two partitions $D,D'$ of $[0,1]$ satisfy a PDE of the form \cref{eq:pde_log1}, where the single coefficient is determined by the increments $\{W_{s_i,s_{i+1}}\}_{i=0}^{N-1}$ and $\{V_{t_j,t_{j+1}}\}_{j=0}^{N'-1}$ of the paths $W$ and $V$ on their respective partitions $\{s_i\}_{i=0}^{N}$ and $\{t_j\}_{j=0}^{N'}$. The solution converges to the signature kernel of $\bmx$ and $\bmy$ as the mesh size of the partitions tends to $0$, since the signatures of piecewise linear approximations converge to the Stratonovich signatures of $\bmx$ and $\bmy$ \cite[Sec. 3.3.2]{lyons2004differential}. 
To increase the order of approximation, we add higher order terms on top of the increment. For example, when $n=2$, we use the terms given by the L\'{e}vy area for all $p,q= 1,\ldots,d$ by  
\begin{align*}
    \areax^{(p,q)}_{s,s'} = \frac{1}{2}{\int_{\sigma'=s}^{s'}\int_{\sigma=s}^{\sigma'}}\circ\!dW^{(p)}(\sigma)\circ \!dW^{(q)}(\sigma') - \circ dW^{(q)}(\sigma)\circ \!dW^{(p)}(\sigma').
\end{align*}
To compare the newly derived log-PDE schemes in this article, we use the following estimator of the error
\begin{figure}[h]
\centering
\includegraphics[scale=0.8]{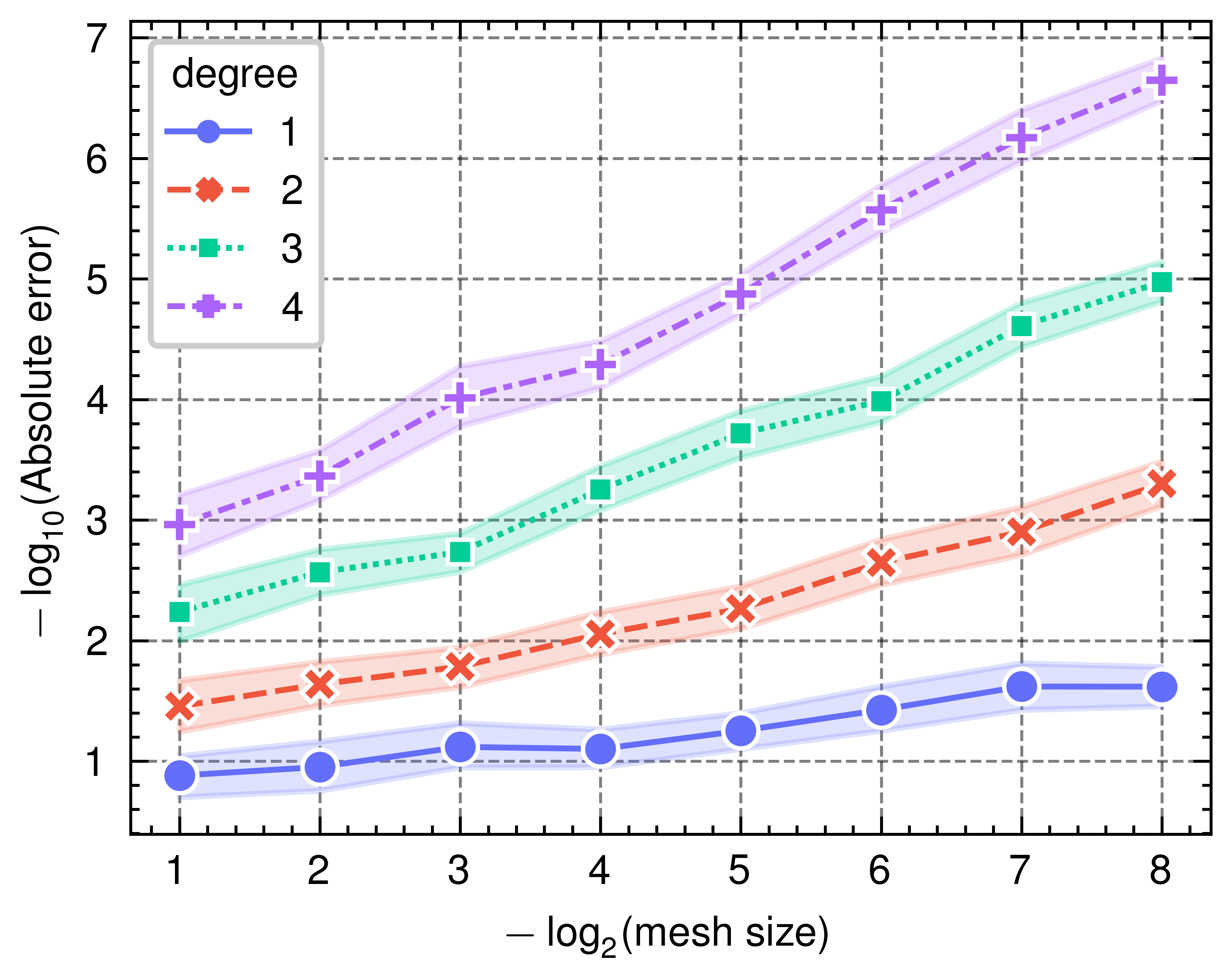}
\caption{Log-PDE approximation of the signature kernel of two BM sample paths. }\label{fig:rate_exp}
\end{figure}
\begin{align}\label{eq:error_estimator}
    \hat{E} = |\hat{f}^{\text{fine}}(1,1)-\hat{h}^0(1,1)|
\end{align}
where $\hat{f}^{\text{fine}}$ denotes the numerical solution (obtained with \Cref{alg:full_algo}) of the one-dimensional Goursat problem obtained by approximating the inputs by \acrlong{PABs} of degree $1$ on a regular partition of small mesh size $\Delta t= 1/N$; and $\hat{h}_0$ is the first coordinate of numerical solution of the PDE system obtained by approximating the inputs by \acrlong{PABs} of degree $n$ on a regular partition of mesh size $\Delta t= 2^k/N$. For the experiment, we consider $2$-dimensional Brownian motion, and construct the fine grid with $N=2^{12}$ subintervals. \Cref{fig:rate_exp} shows the error as a function of the mesh size coarsening factor $k\in\{4, 5, \ldots, 11\}$ for different degrees $n\in\{1,2,3,4\}$. The experiment is repeated on $100$ pairs of sample paths from $\mathbb R^2$-valued Brownian motion on the unit time interval $[0,1]$, and the dots on \Cref{fig:rate_exp} correspond to the mean of the errors calculated according to \cref{eq:error_estimator}. We see that, for any fixed partitioning of the interval $[0,1]$, the error decreases with the degree $n$. As expected, for any degree $n$, the error also decreases with the mesh size.


\section{Conclusion and future directions}\label{sec:conclusion}

In this paper, we have addressed the computational intractability of existing PDE solvers for signature kernels when applied to highly oscillatory paths. Building on the analysis of \cite{salvi2021signature}, we extended the theory to encompass rough paths by leveraging the framework of smooth rough paths. This led to a new system of PDEs, whose coefficients are expressed in terms of higher-order iterated integrals of the input rough paths, to approximate rough signature kernels. We proved the well-posedness of this system and derived quantitative error bounds, resulting in a higher-order and tractable approximation to rough signature kernels.
Possible extensions of this work include developing similar schemes for so-called \emph{weighted signature kernels} \cite{cass2024weighted} and developing adaptive versions, where the partition size and the degree of the \acrlong{PABs} are adjusted instead of being fixed at the beginning, possibly building on \cite{bayer2023adaptive}. Assuming that smooth rough paths are limit points bounded variation paths, our system of PDEs \cref{eq:sgrp_ker}, \cref{eq:sgrp_phi}, \cref{eq:sgrp_psi}, can be seen as an homogenisation limit of a sequence of Goursat PDEs from \cite{salvi2021signature}. We leave this homogenisation interpretation for future work.

\section*{Acknowledgements}
This work was supported in part by EPSRC (NSFC) under Grant EP/S026347/1, in part by The Alan Turing Institute under the EPSRC grant EP/N510129/1, the Data Centric Engineering Programme (under the Lloyd’s Register Foundation grant G0095), the Defence and Security Programme (funded by the UK Government) and the Office for National Statistics \& The Alan Turing Institute (strategic partnership) and in part by the Hong Kong Innovation and Technology Commission (InnoHK Project CIMDA).

\section*{Declarations}
For the purpose of open access, the authors have applied a Creative Commons Attribution (CC BY) license to any Accepted Manuscript version arising.\\

\bibliographystyle{siamplain}
\bibliography{references}

\appendix

\section{Error estimates proofs}\label{sec:proof}

\subsection{Local error estimates}\label{ssec:local}

\begin{proof}[Proof of Lemma~\ref{lemma:local_error_smooth}]
To ease notation we remove indexing by $[s,t]$. By submultiplicativity of the $\ell_1$-norm
    \begin{align*}
    \|\tensora\tensorprod S(x)_{s,t} - \mathcal{L}^{n}_{s,t}(\tensora, x)\|_1 &= \left\| \tensora \tensorprod S(x) - \tensora \tensorprod\exp(\log_nS(x)) \right\|_1 \\
    &\leq  \| \tensora\|_1\left\| \exp(\log S(x)) - \exp(\log_n S(x)) \right\|_1
    \end{align*}
Then, we have to bound the error 
\begin{align*}
    E&:=\left\| \exp(\log S(x)) - \exp(\log_n S(x)) \right\|_1.
\end{align*}
Using the definition of $\exp$ and $\log$, we obtain
\begin{align*}
    E&= \left\| \sum_{k=0}^\infty \frac{1}{k!} \log S(x)^{\cdot k} - (\log_n S(x))^{\cdot k}\right\|_1 \\
    &=  \left\| \sum_{k=0}^\infty \frac{1}{k!} \left(\sum_{i=1}^\infty \pi_i \log S(x)\right)^{\cdot k} - \left(\sum_{i=1}^n \pi_i \log S(x)\right)^{\cdot k} \right\|_1 \\
    &=  \left\| \sum_{k=0}^\infty \frac{1}{k!} \sum_{j=0}^{k-1} C^k_j \left(\sum_{i=1}^n \pi_i \log S(x) \right)^{\cdot j} \left(\sum_{i=n+1}^\infty \pi_i \log S(x)\right)^{\cdot (k-j)} \right\|_1 \\
    &\leq \sum_{k=0}^\infty \frac{1}{k!} \sum_{j=0}^{k-1} C^k_j \left(\sum_{i=1}^n \|\pi_i\log S(x)\|_{\statespace^{\otimes i}}\right)^{j} \left(\sum_{i=n+1}^\infty \|\pi_i\log S(x)\|_{\statespace^{\otimes i}}\right)^{k-j} 
    \\ 
    &\leq \sum_{k=0}^\infty \frac{1}{k!} \sum_{j=0}^{k-1} C^k_j \left(\sum_{i=1}^n (e\|x\|_{1\text{-var}})^i\right)^{j} \left(\sum_{i=n+1}^\infty (e\|x\|_{1\text{-var}})^i\right)^{k-j} \\
    &=  \sum_{k=0}^\infty \frac{1}{k!} \left(\left(\sum_{i=1}^\infty (e\|x\|_{1\text{-var}})^i\right)^k - \left(\sum_{i=1}^n (e\|x\|_{1\text{-var}})^i\right)^k\right) \\
    &= \left(\exp\left(\sum_{i=1}^\infty (e\|x\|_{1\text{-var}})^i\right) -\exp\left(\sum_{i=1}^n (e\|x\|_{1\text{-var}})^i\right) \right) \\ 
    &=  \left(\exp\left(\frac{e\|x\|_{1\text{-var}}}{1 - e\|x\|_{1\text{-var}}}\right) -\exp\left(\frac{e\|x\|_{1\text{-var}}(1 - (e\|x\|_{1\text{-var}})^n)}{1 - e\|x\|_{1\text{-var}}}\right) \right) \\
    &= \exp \left(\frac{e\|x\|_{1\text{-var}}(1 - (e\|x\|_{1\text{-var}})^n)}{1 - e\|x\|_{1\text{-var}}}\right)\left(\exp\left(\frac{(e\|x\|_{1\text{-var}})^{n+1}}{1 - e\|x\|}\right) - 1\right) \\
    &\leq \frac{1}{1 - e\|x\|_{1\text{-var}}}\exp\left(\frac{e\|x\|_{1\text{-var}} + (e\|x\|_{1\text{-var}})^{n+1}}{1 - e\|x\|_{1\text{-var}}}\right) (e\|x\|_{1\text{-var}})^{n+1}  \\
    &\leq \frac{1}{1 - e\|x\|_{1\text{-var}}}\exp\left(\frac{2e\|x\|_{1\text{-var}}}{1 - e\|x\|_{1\text{-var}}}\right) (e\|x\|_{1\text{-var}})^{n+1}  \\
    &:= C (e\|x\|_{1\text{-var}})^{n+1}
\end{align*}
where in the penultimate inequality we used the simple bounds
\begin{equation*}
    \exp \left(\frac{e\|x\|_{1\text{-var}}(1 - (e\|x\|_{1\text{-var}})^n)}{1 - e\|x\|_{1\text{-var}}}\right) \leq \exp \left(\frac{e\|x\|_{1\text{-var}}}{1 - e\|x\|_{1\text{-var}}}\right), \quad \text{and}
\end{equation*}
\begin{equation*}
    \exp\left(\frac{(e\|x\|_{1\text{-var}})^{n+1}}{1 - e\|x\|_{1\text{-var}}}\right) - 1 \leq \frac{(e\|x\|_{1\text{-var}})^{n+1}}{1 - e\|x\|_{1\text{-var}}} \exp\left(\frac{(e\|x\|_{1\text{-var}})^{n+1}}{1 - e\|x\|_{1\text{-var}}}\right)
\end{equation*}
and earlier we used the fact that for any $i \geq 1$
\begin{equation}
    \pi_i \log S(x) = \sum_{j=1}^i \frac{(-1)^j}{j}\sum_{i_1 + ... + i_j = i} \pi_{i_1} S(x)\otimes ... \otimes \pi_{i_j} S(x),
\end{equation}
therefore, by the signature factorial decay and then using the multinomial theorem we get
\begin{align}
\| \pi_i \log S(x) \|_{\statespace^{\otimes i}} &\leq  \sum_{j=1}^i \frac{1}{j} \sum_{i_1 + ... + i_j = i} \frac{\|x\|^{i_1}_{1\text{-var}}...\|x\|_{1\text{-var}}^{i_j}}{i_1!...i_j!} \\
&= \sum_{j=1}^i \frac{1}{ji!}(j\|x\|_{1\text{-var}})^i \leq \|x\|_{1\text{-var}}^i\frac{i^i}{i!} \leq (e\|x\|_{1\text{-var}})^i.
\end{align}
where we used the classical inequality $i! \geq (i/e)^i$.
\end{proof}

\begin{lemma}\cite[Exercise 3.9]{lyons2004differential}\label{neoclassical_inequality}
For any $p \in [1,\infty)$, $n \in \mathbb{N}$, and $x_1,...,x_j \ge 0,$ 
\begin{equation}
    \frac{1}{p^{2j-2}} \sum_{i_1 + ... + i_j = i} \frac{x_1^{i_1/p}... x_j^{i_j/p}}{\left(i_1/p\right)! ... \left(i_j/p\right)!} \leq \frac{(x_1+...+x_j)^{\frac{i}{p}}}{\left(i/p\right)!},
\end{equation}
\end{lemma}

\begin{proof}[Proof of Lemma~\ref{lemma:local_error}]
Using the inequality (\ref{p-var_bound}) as well as \Cref{neoclassical_inequality}, for any $i \in \mathbb N$ we have
\begin{align*}
    \| \pi_i \log S(\tensor{X}) \|_{\statespace^{\otimes i}} &= \left\|\sum_{j=1}^i \frac{(-1)^j}{j}\sum_{i_1 + ... + i_j = i} \pi_{i_1} S(\tensor{X})\otimes ... \otimes \pi_{i_j} S(\tensor{X})\right\|_{\statespace^{\otimes i}}\\
    &\leq \sum_{j=1}^i \frac{(-1)^j}{j}\sum_{i_1 + ... + i_j = i}  \|\pi_{i_1} S(\tensor{X})\|_{\statespace^{\otimes i_1}} ...  \|\pi_{i_j}S(\tensor{X})\|_{\statespace^{\otimes i_j}}\\
    &\leq  \sum_{j=1}^i \frac{1}{j} \sum_{i_1 + ... + i_j = i} \frac{1}{i_1!...i_j!}\frac{\omega(s,t)^{i_1/p}}{\beta(i_1/p)!}...\frac{\omega(s,t)^{i_j/p}}{\beta(i_j/p)!} \\
    &\leq \sum_{j=1}^i\frac{1}{j}\frac{(j\omega(s,t))^{i/p}}{(i/p)!} \\
    &\leq \omega(s,t)^{i/p} \frac{i^{i/p}}{(i/p)!} \\
    &\leq (e\omega(s,t))^{i/p}
\end{align*}
where we used the classical inequality $x! \geq (x/e)^x$ for any $x>0$. Repeating the same computations as in the proof of \Cref{lemma:local_error_smooth} yields the result.
\end{proof}

\begin{lemma}\label{lemma:bound_local_logODE_sol} Let $\mathbb X\in G\Omega_p(\statespace)$ be a geometric $p$-rough path, and $\tensora$ an element of $T((\statespace))$. Then, 
   \begin{equation}
   \|\mathcal{L}^{n}_{s,t}(\tensora;\tensor{X})\|_1 \leq \|\tensora\|_1\prod_{\ell=1}^{n} \exp((ew(s,t))^{1/p})
   \end{equation}
\end{lemma}
\begin{proof}[Proof of Lemma~\ref{lemma:bound_local_logODE_sol}]
Using similar calculations as in the proof of \Cref{lemma:local_error}
\begin{align*}
    \|\mathcal{L}^{n}_{s,t}(\tensora;\tensor{X})\|_1&=\|\tensora\tensorprod\exp(\log_n S(x)_{s,t})\|_1\\
    &\leq \|\tensora\|_1\left\| \sum_{k=0}^{\infty}\frac{1}{k!}(\log_n S(x)_{s,t})^k\right\|_1\\
    &\leq \|\tensora\|_1\sum_{k=0}^{\infty}\frac{1}{k!}\left\|\left(\sum_{\ell=1}^{n}\pi_\ell \log S(x)_{s,t}\right)^k\right\| \\ 
    &\leq \|\tensora\|_1\sum_{k=0}^{\infty}\frac{1}{k!}\left(\sum_{\ell=1}^{n}\|\pi_\ell \log S(x)_{s,t}\|\right)^k\\ 
    &\leq \|\tensora\|_1\sum_{k=0}^{\infty}\frac{1}{k!}\left(\sum_{\ell=1}^{n}(ew(s,t))^{\ell/p}\right)^k\\ 
    &= \|\tensora\|_1\exp\left(\sum_{\ell=1}^{n}(ew(s,t))^{\ell/p}\right) \\
    &= \|\tensora\|_1\prod_{\ell=1}^{n}\exp(ew(s,t))^{\ell/p})
\end{align*}
\end{proof}

\subsection{Global error estimate}\label{ssec:global}

\begin{proof}[Proof of Lemma~\ref{lemma:global_error}]
Consider the sequence $(a_i)_{i=1}^{N}$ with main term defined by
\begin{align*}
    a_i = \prod_{k=1}^{i-1}\exp(\log_n S(x)_{t_k,t_{k+1}})\tensorprod\prod_{k=i}^{N-1}
 S(x)_{t_{k},t_{k+1}}.
\end{align*}
This is the solution of the signature RDE on $[t_i, t_N]$ started at the solution of the step-$n$ log-ODE at time $t_i$. The first term is given by 
\begin{align*}
    a_1&=\prod_{k=1}^{N-1}S(x)_{t_{k},t_{k+1}}= S(x)_{t_{1}, t_N}
\end{align*}
the signature of $x$ on the interval $[t_1, t_N]$. The last term is given by
\begin{align*}
    a_{N}&=\prod_{k=1}^{N-1}\exp(\log_n S(x)_{t_k,t_{k+1}}) = \widetilde{Z}_{N}
\end{align*}
the solution of the step-$n$ log-ODE on $[t_1, t_N]$.
For any $i=1, \ldots, N-1$, the main term of the sequence can be rewritten as
\begin{align*}
a_{i} &= \prod_{k=1}^{i-1}\exp(\log_n S(x)_{t_k,t_{k+1}})\tensorprod S(x)_{t_{i}, t_{i+1}}
 \tensorprod S(x)_{t_{i+1},t_{N}}
 \end{align*}
 where we have used Chen's identity on the signature of $x$ between $[t_i, t_{i+1}]$ and $[t_{i+1}, t_N]$, and the fact that $S_{t_N, t_N}=1$. 
 Similarly, for any $i=1, \ldots, N-1$, the next term of the sequence is the solution of the signature RDE on $[t_{i+1}, t_N]$ started at $t_{i+1}$ at the solution of the step-$n$ log-ODE at time $t_{i+1}$. This time, we rewrite the solution $\widetilde{Z}_{i+1}$ of the step-$n$ log-ODE on $[t_1, t_{i+1}]$ as the solution of the step-$n$ log-ODE on $[t_{i}, t_{i+1}]$, started at $\widetilde{Z}_{i}$ at $t_i$, so that the term $a_{i+1}$ reads as:
 \begin{align*}
    a_{i+1} &= \prod_{k=1}^{i}\exp(\log_n S(x)_{t_k,t_{k+1}})
 \prod_{k=i+1}^{N-1}
 S(x)_{t_{k},t_{k+1}}\\
 &=\prod_{k=1}^{i-1}\exp(\log_n S(x)_{t_k,t_{k+1}})
 \tensorprod \exp(\log_n S(x)_{t_i,t_{i+1}})\tensorprod S(x)_{t_{i+1},t_N}  
\end{align*}
Therefore, the difference between the solution $S(x)_{0,T}$ of the signature RDE and the step-$n$ log-ODE solution can be rewritten as a telescopic sum which reads as: 
\begin{align*}
   E&:=S(x)_{0,T}-\widetilde{Z}_{N} \\
   &= \sum_{i=1}^{N-1}(a_i - a_{i+1})\\
 &=\sum_{i=1}^{N-1}\prod_{k=1}^{i-1}\exp(\log_n S(x)_{t_i,t_{i+1}})
 \tensorprod\left(S(x)_{t_i,t_{i+1}}-\exp(\log_n S(x)_{t_i,t_{i+1}})\right)\tensorprod S(x)_{t_{i+1},t_N} 
\end{align*}
Taking the norm we get
\begin{align*}
   \|S(x)_{0,T}-\widetilde{Z}_{N} \|&\leq \sum_{i=0}^{N-1} \| \widetilde{Z}_{i}\| \|S(x)_{t_i, t_{i+1}}-\exp(\log_n S(x))_{t_i,t_{i+1}}\|\|S(x)_{t_{i+1},t_N} \|
\end{align*}
By \Cref{lemma:bound_local_logODE_sol}, we obtain the following bound on the solution of the step-$n$ log-ODE on the time interval $[0,t_i]$
\begin{align}
    \|\widetilde{Z}_{i}\|&=\|\prod_{k=1}^{i-1} \exp(\log_nS(x)_{t_k,t_{k+1}})\|\\
    &\leq \prod_{k=1}^{i-1}\| \log_n S(x)_{t_k,t_{k+1}})\| \\
    &\leq \prod_{k=1}^{i-1}\prod_{\ell=1}^{n} \exp((ew(t_k,t_{k+1}))^{1/p})
\end{align}
By the extension theorem, the norm of the signature of $x$ on $[t_{i+1}, t_N]$ is bounded by
\begin{align*}
    \|S(x)_{t_{i+1},t_N}\|\leq \sum_{k=0}^{\infty}\frac{(w(t_{i+1},t_N))^{k/p}}{\beta_p(k/p)!} \leq \exp(w(0,T);p)
\end{align*}
Applying \Cref{lemma:local_error}, we have  
\begin{align*}
   \|S(x)_{s,t}-\exp(\log_n S(x)_{s,t)}\|&\leq \frac{1}{1 - (ew(s,t))^{\frac{1}{p}}}\exp\left(\frac{2(ew(s,t))^{\frac{1}{p}}}{1 - (ew(s,t))^{\frac{1}{p}}}\right) (ew(s,t))^{\frac{n+1}{p}} \\ 
   & \leq \max_{i=1, \ldots, N} c_i(ew(s,t))^{\frac{n+1}{p}-1} w(s, t)
\end{align*}
Therefore, 
\begin{align*}
   \|S(x)_{0,T}-\widetilde{Z}_N \|&\leq \exp(w(0,T);p)^2\max_{i=1, \ldots, N} c_i(ew(t_i,t_{i+1}))^{\frac{n+1}{p}-1} \sum_{i=0}^{N-1} w(t_i, t_{i+1}) \\ 
   & \leq w(0,T)\exp(w(0,T);p)^2\max_{i=1, \ldots, N}c_i\max_{i=1, \ldots, N} (ew(t_i,t_{i+1}))^{\frac{n+1}{p}-1} \\
   &\leq C\max_{i=1, \ldots, N} (ew(t_i,t_{i+1}))^{\frac{n+1}{p}-1}
\end{align*}
We note that, using \Cref{eq:partition_control} below, we can choose at tbe beginning a partition so that $w(t_i,t_{i+1})\leq \frac{w(0,T)}{N}$ for all $t_i$.
\end{proof}

\begin{lemma}\label{eq:partition_control}
Let $w$ be a control on $[s,t]$. For all $N\geq 1$, there exists a partition $D=\{s=t_0< t_1< t_2< \ldots< t_N=t\}$ so that $w(t_i,t_{i+1})\leq \frac{w(s,t)}{N}$ for all $i=0,\ldots, N-1$.
\end{lemma}
\begin{proof}
We proceed by induction. It is clearly true when $N=1$ as we can take $D=\{s,t\}$. Consider the case where $N>1$. Consider the function $\phi(u)=w(s,u)$. The function $\phi$ is continuous, $\phi(s)=0$ and $\phi(t)=w(s,t)$. Hence, by the intermediate value theorem, there exists a $u$ such that $\phi(u)=w(s,t)/N$. By the superadditivity of the control $w$, we have $w(s,u)+w(u,t)\leq w(s,t)$. Therefore, 
$$w(u,t)\leq w(s,t)-\frac{w(s,t)}{N}=\frac{N-1}{N} w(s,t).$$ By the induction hypothesis, $[u,t]$ can be broken into $N-1$ intervals $\{u=t_1< \ldots < t_{N}=t\}$ such that $$w(t_i,t_{i+1})\leq \frac{w(u,t)}{N-1}\leq \frac{w(s,t)}{N}.$$ Setting $t_0=s$ we have proved the theorem for this $N$ and hence for all $N$. 
\end{proof}

\section{Main results}\label{ssec:main_results_proofs}

\begin{proof}[Proof of Lemma~\ref{thm:pde}]
Omitting all indexing intervals $[s_1,s_2]$ and $[t_1,t_2]$ to ease notation we have
\begin{align*}
    &\left\langle \tensor{Z}^x_u - \tensor{Z}_0^x ,\tensor{Z}^y_v - \tensor{Z}_0^y \right\rangle \\
    &= f_{u,v} - f_{u,0} - f_{0,v} + f_{0,0}\\
    &= 
    \Big\langle \!\int_0^u\!\! \tensor{Z}_p^{x} \cdot \Big[\sum_{i=1}^d x^{(i)}e_i + \!\!\sum_{i,j=1}^d \!x^{(i,j)} e_i\otimes e_j  \Big]dp, \!\int_0^v \!\!\tensor{Z}_q^{y} \cdot \Big[\sum_{i=1}^d\!y^{(i)}e_i + \!\!\sum_{i,j=1}^dy^{(i,j)} e_i\otimes e_j \Big]dq \!\Big\rangle \\  
    &= \int_0^u \int_0^v \Big[ \gamma f_{p,q}   + \sum_{i,j,k=1}^d x^{(i,j)} y^{(k)}\bigl\langle \tensor{Z}_p^{x} \cdot e_i \otimes e_j, \tensor{Z}_q^{y} \cdot e_k \bigr\rangle   \Big] dpdq \\
    &\quad\quad  + \int_0^u \int_0^v \Big[\sum_{i,j,k=1}^d x^{(i)}y^{(j,k)} \bigl\langle \tensor{Z}_p^{x} \cdot e_i,  \tensor{Z}_q^{y} \cdot e_j \otimes e_k \bigr\rangle \Big] dpdq\\
    &= \int_0^u \int_0^v \Big[ \gamma f_{p,q}   + \sum_{i,j=1}^dx^{(i,j)} y^{(j)} \varphi^{(i)}_{p,q} + \sum_{i,j=1}^d x^{(i)} y^{(i,j)} \psi^{(j)}_{p,q} \Big]dpdq \\
    &= \int_0^u \int_0^v \Big[\gamma f_{p,q} + \sum_{j=1}^d \alpha_j \varphi^{(j)}_{p,q} +  \sum_{j=1}^d \beta_j \psi^{(j)}_{p,q} \Big]dpdq 
\end{align*}
where 
\begin{align}
    f_{p,q}:= \left\langle \tensor{Z}_p^{x}, \tensor{Z}^y_q\right\rangle, \quad \varphi^{(i)}_{p,q}:= \left\langle \tensor{Z}_p^{x} \tensorprod e_i, \tensor{Z}_q^{y} \right\rangle, \quad \psi^{(i)}_{p,q}:= \left\langle \tensor{Z}_p^{x}, \tensor{Z}_q^{y} \tensorprod e_i \right\rangle
\end{align}
and 
\begin{equation*}
    \alpha_j := \sum_{i=1}^d x^{(i,j)}y^{(i)}, \quad \beta_j := \sum_{i=1}^d x^{(i)}y^{(i,j)}, \quad \gamma:=\sum_{i=1}^{d}x^{(i)}y^{(i)}+\sum_{i=1}^{d}\sum_{j=1}^{d}x^{(i,j)}y^{(i,j)}.
\end{equation*}

\medskip 
\noindent Now, for any $r \in \{1, ..., d\}$ we have
\begin{align*}
    \left\langle \tensor{Z}_u^{x} \tensorprod e_r, \tensor{Z}_v^{y} - \tensor{Z}_0^{y} \right\rangle &= \varphi^{(r)}_{u,v} - \varphi^{(r)}_{u,0} \\
    &= \Big\langle \tensor{Z}_u^{x}  \tensorprod e_r, \int_0^v \tensor{Z}_q^{y} \cdot \sum_{i=1}^d\sum_{j=1}^d\left( y^{(i)}e_i + y^{(i,j)} e_i\otimes e_j\right) dq\Big\rangle \\
    &= \int_0^v \Big(y^{(r)} f_{u,q} + \sum_{i=1}^d y^{(i,r)} \psi^{(i)}_{u,q} \Big) dq 
\end{align*}
Similarly
\begin{align*}
    \left\langle \tensor{Z}_u^{x} - \tensor{Z}_0^{x}, \tensor{Z}_v^{y}\cdot e_r \right\rangle &= \psi^{(r)}_{u,v} - \psi^{(r)}_{0,v} = \int_0^u \Big( x^{(r)}f_{p,v} + \sum_{i=1}^d x^{(i,r)} \varphi^{(i)}_{p,v} \Big) dp.
\end{align*}
Differentiating with respect to $u$ and $v$ we obtain the aforementioned system of PDEs.
\end{proof}

\begin{proof}[Proof of \Cref{thm:global_error_second_order_pde}]
Denote by $\widetilde{Z}_{N}, \widetilde{Z}_{N'}$ the solutions of the step-$n$ log-ODEs over $[0,T]$. By Cauchy-Schwarz, \Cref{lemma:global_error} and \Cref{lemma:bound_local_logODE_sol}
\begin{align*}
    |\langle S(\tensor{X}), S(\tensor{Y})\rangle - f(T, T)| &= |\langle S(\tensor{X}) - \widetilde{Z}_{N}, \widetilde{Z}_{N'} \rangle + \langle S(\tensor{Y}) - \widetilde{Z}_{N'}, \widetilde{Z}_{N} \rangle| \\
    &\leq |\langle S(\tensor{X})_{0,T} - \widetilde{Z}_{N}, \widetilde{Z}_{N'} \rangle | + |\langle S(\tensor{Y}) - \widetilde{Z}_{N'} , \widetilde{Z}_{N} \rangle| \\
    &\leq \|S(\tensor{X})- \widetilde{Z}_{N}\|_2 \|\widetilde{Z}_{N'}\|_2 + \|S(\tensor{Y}) - \widetilde{Z}_{N'}\|_2 \|\widetilde{Z}_{N}\|_2 \\
    &\leq \|S(\tensor{X}) - \widetilde{Z}_{N}\|_1 \|\widetilde{Z}_{N'}\|_1 + \|S(\tensor{Y}) - \widetilde{Z}_{N'}\|_1 \|\widetilde{Z}_{N}\|_1 \\
    &\leq C_1 \max_{i=0, \ldots, N-1} (e\omega_x(s_i,s_{i+1}))^{(n+1)/p-1} \\
    &\quad + C_2 \max_{i=0, \ldots, N'-1} (e\omega_y(t_i,t_{i+1}))^{(n+1)/p-1} \\
    &\leq 2\max\big\{C_1\max_{i=0, \ldots, N-1} (e\omega_x(s_i,s_{i+1}))^{(n+1)/p-1}, \\&\quad C_2\max_{i=0, \ldots, N'-1} (e\omega_y(t_i,t_{i+1}))^{(n+1)/p-1}\big\},
\end{align*}
where we used the fact that $\|\cdot\|_2 \leq \| \cdot \|_1$.
\end{proof}

\begin{proof}[Proof of \Cref{thm:uniqueness}]
    Existence follows by \Cref{thm:pde}. To prove uniqueness, it suffices to prove that (\ref{eqn_phi}) has a unique solution in $C([0,1]^2,\mathbb R^{2d + 1})$. We follow a similar strategy as in \cite{mckee2000euler}. By Picard iteration, define the sequence $(\Phi_n)$ in $C([0,1]^2,\mathbb R^{2d + 1})$ recursively as follows
    \begin{equation}
        \Phi_{n+1}(s,t) = h_0(s,t) + \mathcal{T}\Phi_{n}(s,t), \quad \Phi_0 \equiv 0.
    \end{equation}
    Let 
    \[
    \delta := K_4 + K_5 + K_6 + K_7 + \sqrt{\bigl(K_4 + K_5 + K_6+K_7\bigr)^2 + 2\bigl(K_1 + K_2+K_3\bigr)}.
    \]
    Then it's easy to verify that
    \begin{equation}
        \frac{K_1 + K_2 + K_3}{\delta^2} + \frac{K_4 + K_6}{\delta} + \frac{K_5 + K_7}{\delta} = \frac{1}{2}
    \end{equation}
    We show by induction that for any $s,t$
    \begin{equation}\label{induction}
        \|\Phi_{n}(s,t) - \Phi_{n-1}(s,t) \| \leq h_0 \left(\frac{1}{2}\right)^{n-1} e^{\delta(s+t)}
    \end{equation}
    As $\Phi_1 = h$, (\ref{induction}) trivially holds for $n=1$. Assume  (\ref{induction}) holds, then
    \begin{align*}
        &\|\Phi_{n+1}(s,t) - \Phi_{n}(s,t)\| = \| \mathcal{T}\Phi_{n}(s,t) - \mathcal{T}\Phi_{n-1}(s,t)\| \\
        &\leq K_1 \int_0^s \int_0^t |[\Phi_{n}(u,v) - \Phi_{n-1}(u,v)]_1| dudv \\
        &+ K_2 \int_0^s \int_0^t \|[\Phi_{n}(u,v) - \Phi_{n-1}(u,v)]_{2:d+1}\| dudv  \\
        &+ K_3 \int_0^s \int_0^t \|[\Phi_{n}(u,v) - \Phi_{n-1}(u,v)]_{d+2:2d+1}\| dudv  \\
        &+ K_5 \int_0^t |[\Phi_{n}(s,v) - \Phi_{n-1}(s,v)]_1| dv + K_7 \int_0^t \|[\Phi_{n}(s,v) - \Phi_{n-1}(s,v)]_{d+2:2d+1}\|dv\\
        &+ K_4 \int_0^s |[\Phi_{n}(u,t) - \Phi_{n-1}(u,t)]_1| du + K_6 \int_0^s \|[\Phi_{n}(u,t) - \Phi_{n-1}(u,t)]_{2:d+1}\|du \\
        &\leq h_0(K_1 + K_2 + K_3) \int_0^s \int_0^t \left(\frac{1}{2}\right)^{n-1} e^{\delta(u + v)} dud \\
        &+ h_0(K_4 + K_6) \int_0^s \left(\frac{1}{2}\right)^{n-1} e^{\delta(u+t)}du + h_0(K_5 + K_7) \int_0^t \left(\frac{1}{2}\right)^{n-1} e^{\delta(s+v)}dv \\
        &= h_0 \left(\frac{1}{2}\right)^{n-1} \bigg(\frac{K_1 + K_2 + K_3 }{\delta^2} (e^{\delta s} - 1)(e^{\delta t} - 1) + \frac{K_4 + K_6}{\delta}e^{\delta t}(e^{\delta s} - 1) \\
        &+ \frac{K_5 + K_7}{\delta}e^{\delta s}(e^{\delta t} - 1) \bigg) \\
        &\leq h_0 \left(\frac{1}{2}\right)^{n-1} \left(\frac{K_1 + K_2 + K_3}{\delta^2} + \frac{K_4 + K_6}{\delta} + \frac{K_5 + K_7}{\delta} \right) e^{s + t} \\
        &= h_0 \left(\frac{1}{2}\right)^{n}  e^{s + t}  
    \end{align*}
    Thus $(\Phi_n)$ is a Cauchy sequence in the  Banach space $C([0,1]^2,\mathbb R^{2d + 1})$, which implies that $\Phi = \lim_{n \to \infty} \Phi_n$ is an element of $C([0,1]^2,\mathbb R^{2d + 1})$. Noting that each $\Phi_n$ is bounded by
    \begin{align*}
        \|\Phi_n(s,t)\| \leq \|\sum_{k=0}^{n-1}\Phi_k(s,t) - \Phi_{k-1}(s,t)\| \leq h_0e^{\delta(s+t)}\sum_{k=0}^n \left(\frac{1}{2}\right)^k \leq 2h_0e^{\delta(s+t)}
    \end{align*}
    which is integrable on $[0,1]^2$, it follows by dominated convergence that the limit $\Phi$ also satisfies equation (\ref{eqn_phi}). We now assume there exists another solution $\Phi^*$ and show by induction that
    \begin{equation}\label{sec:ind}
        \|\Phi_n(s,t) - \Phi^*(s,t)\| \leq \left(\frac{1}{2}\right)^n \|\Phi^*\|_\infty e^{\delta(s + t)}.
    \end{equation}
    The case $n=0$ follows immediately since $\Phi_0 \equiv 0$. Assume (\ref{sec:ind}) holds. Then
    \begin{align*}
        &\|\Phi_{n+1}(s,t) - \Phi^*(s,t)\| = \| \mathcal{T}\Phi_{n}(s,t) - \mathcal{T}\Phi^*(s,t)\| \\
        &\leq K_1 \int_0^s \int_0^t |[\Phi_{n}(u,v) - \Phi^*(u,v)]_1| dudv \\
        &+ K_2 \int_0^s \int_0^t \|[\Phi_{n}(u,v) - \Phi^*(u,v)]_{2:d+1}\| dudv  \\
        &+ K_3 \int_0^s \int_0^t \|[\Phi_{n}(u,v) - \Phi^*(u,v)]_{d+2:2d+1}\| dudv  \\
        &+ K_5 \int_0^t |[\Phi_{n}(s,v) - \Phi^*(s,v)]_1| dv + K_7 \int_0^t \|[\Phi_{n}(s,v) - \Phi^*(s,v)]_{d+2:2d+1}\|dv\\
        &+ K_4 \int_0^s |[\Phi_{n}(u,t) - \Phi^*(u,t)]_1| du + K_6 \int_0^s \|[\Phi_{n}(u,t) - \Phi^*(u,t)]_{2:d+1}\|du \\
        &\leq \|\Phi^*\|_\infty(K_1 + K_2 + K_3) \int_0^s \int_0^t \left(\frac{1}{2}\right)^{n-1} e^{\delta(u + v)} dud \\
        &+ \|\Phi^*\|_\infty(K_4 + K_6) \int_0^s \left(\frac{1}{2}\right)^{n-1} e^{\delta(u+t)}du + \|\Phi^*\|_\infty(K_5 + K_7) \int_0^t \left(\frac{1}{2}\right)^{n-1} e^{\delta(s+v)}dv \\
        &= \|\Phi^*\|_\infty \left(\frac{1}{2}\right)^{n-1} \bigg(\frac{K_1 + K_2 + K_3 }{\delta^2} (e^{\delta s} - 1)(e^{\delta t} - 1) + \frac{K_4 + K_6}{\delta}e^{\delta t}(e^{\delta s} - 1)  \\
        &+\frac{K_5 + K_7}{\delta}e^{\delta s}(e^{\delta t} - 1) \bigg) \\
        &\leq \|\Phi^*\|_\infty \left(\frac{1}{2}\right)^{n-1} \left(\frac{K_1 + K_2 + K_3}{\delta^2} + \frac{K_4 + K_6}{\delta} + \frac{K_5 + K_7}{\delta} \right) e^{s + t} \\
        &= \|\Phi^*\|_\infty \left(\frac{1}{2}\right)^{n}  e^{s + t}  
    \end{align*}
    Letting $n \to \infty$ yields $\Phi = \Phi^*$.
\end{proof}

\section{The step-$n$ log-PDE method}\label{sec:generic_n}
\subsection{Adjoint of tensor multiplication in the extended tensor algebra}\label{ssec:adjoints_proofs}
\begin{proposition}\label{prop:adjoint}Let $\tensora\in T((V))$, $u\in \statespace^{\otimes p}$ and $v\in \statespace^{\otimes q}$ with $p\leq q$. Then, 
\begin{align}
\radj{u}{\tensora\tensorprod v} = \tensora\tensorprod \radj{u}{v}
\end{align}
\end{proposition}
\begin{proof} Take $u=e_{i_1}\otimes\ldots \otimes e_{i_p} $, $v=e^*_{j_1}\otimes \ldots \otimes e^*_{j_q}$ and $a=e^*_{k_1}\otimes \ldots \otimes e^*_{k_n}$. Applying the definition of the adjoint on basis elements:
\begin{equation}
    \radjmap{u}(a\otimes e^*_{j_1}\otimes\ldots \otimes e^*_{j_q})=\left\{
  \begin{array}{ll}
    a\otimes e^*_{j_{1}}\otimes\ldots\otimes e^{*}_{j_{q-p-1}} & \text{if } n\geq p,~ j_{q-p}=i_1, \ldots, j_q=i_p\\
    0,  & \mbox{otherwise}.
  \end{array}
\right.
\end{equation}
Therefore $\radj{u}{a\otimes v} = a\otimes \radj{u}{v}$. The result follows by linearity.
\end{proof}

\begin{proposition}
    Let $u\in \statespace^{\otimes p}$ and $v\in \statespace^{\otimes q}$ with $p\leq q$. Then, $$\|\radj{u}{v}\|\leq \|u\|_{\statespace^{\otimes p}}\|v\|_{\statespace^{\otimes q}}$$
\end{proposition}
\begin{proof}
By definition, we have
$$\| \radj{u}{v} \| := \sup_{\tensora\in T^{q -p} (\statespace) : \|\tensora\|= 1} [\tensora, \radj{u}{v} ]$$
and  $$[ \tensora\tensorprod u, v] = [\tensora, \radj{u}{v}], \quad \forall \tensora \in T((\statespace)).$$ 
Therefore, 
\begin{align*}
  \sup_{\tensora \in T^{p-a}(\statespace) : \|\tensora\|= 1} [\tensora, \radj{u}{v} ] & = \sup_{\tensora \in T^{q - p}
  (\statespace) : \|\tensora\|= 1} [\tensora\tensorprod u,v]\\
  & \leq \sup_{\tensora\in T^{q - p}(\statespace) : \|\tensora\|= 1} \|\tensora\tensorprod u\|\|v\|\\
  & \leq \sup_{\tensora\in T^{q - p}(\statespace) : \|\tensora\|= 1} \|\tensora\|\|u\| \|v\|\\
  & \leq \|u\|\|v\|
\end{align*}
Therefore, $\| \radj{u}{v} \| \leq \|u\|\|v\|$.
\end{proof}

\subsection{Proof of \Cref{thm:pde_genericn}}
Denote the log signatures of $\mathbb X$ and $\mathbb Y$ over the subintervals $[s_1,s_2]$ and $[t_1,t_2]$ of $[0,T]$ by $\log_nS(\tensor{X})_{s_1,s_2}=(\pab{x}{}^0, \pab{x}{}^1, \pab{x}{}^2, \ldots, \pab{x}{}^n)$ and $\log_nS(\tensor{Y})_{t_1,t_2}=(\pab{y}{}^0, \pab{y}{}^1, \pab{y}{}^2, \ldots, \pab{y}{}^n)$. To simplify notation, we write $\logx=\log_nS(\tensor{X})_{s_1,s_2}$ and $\logy=\log_nS(\tensor{Y})_{t_1,t_2}$. By bilinearity of the inner product and definitions of $\mathbb Z^x$,$\mathbb Z^y$ 
\begin{align}
f_{u,v}-f_{0,v} -f_{u,0} + f_{0,0}&=\left\langle \tensor{Z}^x_u - \tensor{Z}^x_0 ,\tensor{Z}^y_v - \tensor{Z}^y_0 \right\rangle\\
&=\Big\langle \int_{0}^{u}\left(\mathbb Z^x_{p}\tensorprod \logx\right)dp,\int_{0}^{v}\left(\mathbb Z^y_{q}\tensorprod \logy\right)dq\Big\rangle \\
&=\int_{0}^{u}\int_{0}^{v} \Big[f(p,q)\sum_{i=0}^{n}\langle \pab{x}{}^i,\pab{y}{}^i\rangle_{i}   +\sum_{\substack{i,j=0 \\ i\neq j}}^{n}\left\langle \mathbb Z^x_{p}\tensorprod \pab{x}{}^i,\mathbb Z^y_{q}\tensorprod \pab{y}{}^j\right\rangle \Big] dpdq \label{eq:pde_ker_proof}
\end{align}
By \Cref{prop:adjoint}
\[
\left\langle \mathbb{Z}^x_s \tensorprod x^i,\;\mathbb{Z}^y_{t} \tensorprod y^j \right\rangle
=
\begin{cases}
\displaystyle
\langle \ladj{\mathbb{Z}^y_t}{\mathbb{Z}^x_s},\;\radj{\pab{x}{}^i}{\pab{y}{}^j} \rangle,
& i \le j,\\
\displaystyle
\langle \ladj{\mathbb{Z}^x_s}{\mathbb{Z}^y_t},\;\radj{y^j}{x^i} \rangle,
& i \ge j.
\end{cases}
\]
Therefore, the last term in the integrand can be rewritten as
\begin{align*}
\sum_{\substack{i,j=0 \\ i\neq j}}^{n}\!\!\langle \mathbb Z^x_{p}\tensorprod x^i, \mathbb Z^y_{q}\tensorprod y^j\rangle &= \sum_{i=1}^{n}\sum_{j=0}^{i-1}\langle \ladj{\mathbb Z^x_s}{\mathbb Z^y_t}, \radj{\pab{y}{}^j}{\pab{x}{}^i}\rangle + \sum_{i=0}^{n}\sum_{j=i+1}^{n} \!\!\langle\ladj{\mathbb Z^y_t}{\mathbb Z^x_s}, \radj{\pab{x}{}^i}{\pab{y}{}^j } \rangle\\
\vspace{-1cm}
&=\sum_{i=1}^{n}\sum_{j=0}^{i-1}\langle \adjb(p,q), \radj{\pab{y}{}^j}{\pab{x}{}^i}\rangle + \sum_{i=0}^{n}\sum_{j=i+1}^{n} \!\!\langle\adja(p,q), \radj{\pab{x}{}^i}{\pab{y}{}^j } \rangle.
\end{align*}
where we have introduced the following $\pi_{\leq n}(T^{>0}(\statespace))$-valued state variables
\begin{align*}
    \adja(s,t) = \pi_{\leq n}(\ladj{\mathbb Z^y_t}{\mathbb Z^x_s}-f(s,t)\identity), \quad 
   \adjb(s,t) =  \pi_{\leq n}(\ladj{\mathbb Z^x_s}{\mathbb Z^y_t}-f(s,t)\identity). 
\end{align*}
Plugging this in \cref{eq:pde_ker_proof} and differentiating with respect to $u$ and $v$ we obtain the first PDE of the system. 

Now, we derive the equation for $\langle\adja,e_w\rangle$ for any non empty word $w$ of length $w\leq n$. 
\begin{align}\label{eq:adj_integral_eq}
    \adja^w_{u,v}-\adja^w_{0,v}&=\langle\adja(u,v)-\adja(0,v), e_w\rangle\\
    &=\langle \mathbb Z^x_{u}-\mathbb Z^x_{0},\mathbb Z^y_{v}\tensorprod e_w\rangle\\
    &= \Big\langle \int_{0}^{u}\left(\mathbb Z^x_{p}\tensorprod \logx\right)dp,\mathbb Z^y_{v}\tensorprod e_w\Big\rangle\\
    &= \int_{0}^{u}\langle \mathbb Z^x_{p}\tensorprod\logx,\mathbb Z^y_{v}\tensorprod e_w\rangle dp
\end{align}
We write $i=|w|$ and use \Cref{prop:adjoint} to rewrite the integrand as
\begin{align}\label{eq:adj_equation_trick}
    \langle \mathbb Z^x_{u}\tensorprod \logx,\mathbb Z^y_{v}\tensorprod e_w\rangle &= f_{u,v}\langle \logx, e_w\rangle + \sum_{j=0}^{i-1}\langle  \ladj{\mathbb Z^y_{v}}{\mathbb Z^x_{u}}, \radj{\pab{x}{}^j}{e_w}\rangle + \sum_{j=i+1}^{n} \langle  \ladj{\mathbb Z^x_{u}}{\mathbb Z^y_{v}}, \radj{e_w}{\pab{x}{}^j}\rangle
\end{align}
Plugging in \ref{eq:adj_equation_trick} in \ref{eq:adj_integral_eq} we obtain
\begin{align*}
\adja^w_{u,v}&=\adja^w_{0,v}+\int_{0}^{u}\Big[f_{p,v}\langle \logx, e_w\rangle  + 
    \sum_{j=0}^{i-1}\langle  \adja_{p,v}, \radj{\pab{x}{}^j}{e_w}\rangle + \sum_{j=i+1}^{n}\langle  \adjb
    _{p,v}, \radj{e_w}{\pab{x}{}^j}\rangle \Big]dp 
\end{align*}
Differentiating with respect to $u$ we obtain the second PDE of the system in $\statespace ^{\otimes i}$. The third PDE can be derived in the same way.

\subsection{Vectorial form}

The first equation in \Cref{thm:pde_genericn} can be written (in integral form) as
\begin{align*}
    f_{u,v}&=f_{0,v} + f_{u,0} - f_{0,0} + \int_{0}^{u}\int_{0}^{v} f(p,q)\langle \logx,\logy\rangle dpdq \\
    &\quad+\int_{0}^{u}\int_{0}^{v} \Big[\left\langle \adjb(p,q), \radj{\logy}{\logx}\right\rangle +\left\langle \adja(p,q), \radj{\logx}{\logy}\right\rangle\Big] dpdq. 
\end{align*}
We have used $\langle \adjb(s,t),\radj{\pab{y}{}^j}{\pab{x}{}^i}\rangle=0$ for $j\geq i$. This comes from $\radj{\pab{y}{}^j}{\pab{x}{}^i}=0$ (case $j>i$) and $\langle \adjb(s,t),e_{{\o}}\rangle=0$ (case $i=j$). Therefore,we could rewrite 
\begin{align*}
    \sum_{i=1}^{n}\sum_{j=0}^{i-1}\left\langle \adjb(u,v), \radj{\pab{y}{}^j}{\pab{x}{}^i}\right\rangle &= \sum_{i=0}^{n}\sum_{j=0}^{n}\left\langle \adjb(u,v), \radj{\pab{y}{}^j}{\pab{x}{}^i}\right\rangle = \left\langle \adjb(u,v), \radj{\logy}{\logx}\right\rangle. 
\end{align*}
Similarly, using $\radj{\pab{x}{}^i}{\pab{y}{}^j}=0$ for $i>j$ and $\langle \adja(s,t),e_{{\o}}\rangle=0$ we obtain

\begin{align*}
\sum_{i=0}^{n}\sum_{j=i+1}^{n}\left\langle \adja(u,v), \radj{\pab{x}{}^i}{\pab{y}{}^j}\right\rangle
&=\sum_{i=0}^{n}\sum_{j=0}^{n}\left\langle \adja(u,v), \radj{\pab{x}{}^i}{\pab{y}{}^j}\right\rangle =\left\langle \adja(u,v), \radj{\logx}{\logy}\right\rangle. 
\end{align*}
We now seek an equation for tensor $\adja$ 
\begin{align}\label{eq:adj_tensor_series}
\adja_{s,t} &= 0\cdot e_{{\o}}+\sum_{i=1}^{n}\sum_{w:|w|=i}\adja^w_{s,t}e_w 
\end{align}
Using $\langle \adja, e_{{\o}}\rangle=\langle \adjb, e_{{\o}}\rangle=0$ and $\radj{\pab{x}{}^j}{e_w}=0$ for $j>i$ and $\radj{e_w}{\pab{x}{}^j}=0$ for $j<i$
\begin{align}\label{eq:component_wise_eq_adj}
 \adja^w_{u,v} &=\adja^w_{0,v}+\int_{0}^{u}\Big[f_{p,v}\langle \logx, e_w\rangle + 
\langle  \adja_{p,v}, \radj{\logx}{e_w}\rangle + \langle  \adjb_{p,v}, \radj{e_w}{\logx}\rangle \Big]dp
\end{align}
Plugging \cref{eq:component_wise_eq_adj} in \cref{eq:adj_tensor_series} we obtain
\begin{align*}
\adja(u,v) &=  \sum_{i=1}^{n}\sum_{w:|w|=i}\adja^w(0,v)e_w + \sum_{i=1}^{n}\sum_{w:|w|=i}\left(\int_{0}^{u}\langle  \adja(p,v), \radj{\logx}{e_w}\rangle dp\right)  e_w \\
&\quad + \sum_{i=1}^{n}\sum_{w:|w|=i}\left(\int_{0}^{u}\langle  \adjb(p,v), \radj{e_w}{\logx}\rangle dp\right)  e_w.
\end{align*}
The penultimate term can be rewritten as 
\begin{align*}
\sum_{i=1}^{n}\sum_{w:|w|=i}\left(\int_{0}^{u}\langle  \adja(p,v), \radj{\logx}{e_w}\rangle dp\right) e_w &= \sum_{i=0}^{n}\sum_{w:|w|=i}\left(\int_{0}^{u}\langle  \adja(p,v), \radj{\logx}{e_w}\rangle dp\right)  e_w \\
&=\sum_{i=0}^{n}\sum_{w:|w|=i}\left(\int_{0}^{u}\langle  \adja(p,v)\tensorprod \logx,e_w\rangle dp \right) e_w \\
&= \int_{0}^{u}\left(\adja(p,v)\tensorprod\logx\right) dp
\end{align*}
where the first equality comes from the fact that $\langle \adja(s,t), e_{{\o}}\rangle=0$ and the second equality from the definition of the adjoint of right tensor multiplication. For the last term, we use the fact that $\langle \adjb(s,t), \radj{e_{{\o}}}{\logx}\rangle=\langle \adjb(s,t),\logx\rangle$, and obtain
\begin{align*}
I&=\sum_{i=1}^{n}\sum_{w:|w|=i}\left(\int_{0}^{u}\langle  \adjb(p,v), \radj{e_w}{\logx}\rangle dp\right)  e_w\\ &=\sum_{i=0}^{n}\sum_{w:|w|=i}\left(\int_{0}^{s}\langle  \adjb(p,v), \radj{e_w}{\logx}\rangle dp\right)   e_w -  \left(\int_{0}^{u}\langle  \adjb(p,v), \logx\rangle dp\right) e_{{\o}}\\
&=\sum_{i=0}^{n}\sum_{w:|w|=i}\left(\int_{0}^{u}\langle  \adjb(p,v)\tensorprod e_w, \logx\rangle dp\right)   e_w -  \left(\int_{0}^{u}\langle  \adjb(p,v), \logx\rangle dp\right) e_{{\o}}\\
&=\sum_{i=0}^{n}\sum_{w:|w|=i}\left(\int_{0}^{u}\langle e_w, \ladj{ \adjb_{p,v}}{\logx}\rangle dp\right)   e_w - \left( \int_{0}^{u}\langle  \adjb(p,t), \logx\rangle dp\right) e_{{\o}}\\
&=\int_{0}^{u}\ladj{ \adjb_{p,v}}{\logx}dp-  \left(\int_{0}^{u}\langle  \adjb(p,v), \logx\rangle dp\right) e_{{\o}}\\
\end{align*}
Finally, we get
\begin{align}   \adja(u,v)=\adja(0,v)+\int_{0}^{u}(\adja(p,v)\tensorprod \logx) dp
+ \int_{0}^{u}\ladj{ \adjb_{p,v}}{\logx}dp-  \left(\int_{\sigma}^{s}\langle  \adjb(p,v), \logx\rangle dp\right) e_{{\o}}.
\end{align}

\end{document}